\documentclass[twocolumn]{fairmeta}

\usepackage{algorithmicx}
\usepackage{algpseudocode}
\usepackage{amsfonts, amsthm, amsmath, amssymb, mathtools}
\usepackage{booktabs}
\usepackage{wrapfig}
\usepackage{caption}
\usepackage{xcolor}
\usepackage{subcaption}
\usepackage[most]{tcolorbox}
\usepackage{url}

\usepackage{arydshln}

\newtcolorbox{algobox}{
  enhanced, breakable,
  colback=black!4,       
  colframe=black!20,     
  boxrule=0.4pt,
  arc=2mm,               
  left=2mm, right=2mm, top=2mm, bottom=2mm
}

\DeclareCaptionType{algorithm}[Algorithm]

\usepackage{amsmath,amsfonts,bm}

\def\eqref#1{equation~\ref{#1}}

\def\1{\bm{1}}

\DeclareMathAlphabet{\mathsfit}{\encodingdefault}{\sfdefault}{m}{sl}
\SetMathAlphabet{\mathsfit}{bold}{\encodingdefault}{\sfdefault}{bx}{n}

\newcommand{\R}{\mathbb{R}}

\DeclareMathOperator*{\argmin}{arg\,min}

\definecolor{fuchsia(web)}{rgb}{0.65, 0.2, 0.65}

\newtheorem{theorem}{Theorem}
\newtheorem{lemma}{Lemma}
\newtheorem{corollary}[lemma]{Corollary}

\newcommand{\mb}{\mathbf}

\newcommand{\mc}{\mathcal}

\newcommand{\mr}{\text}

\renewcommand{\eqref}[1]{Eq.~(\ref{#1})}

\title{Parallel Stochastic Gradient-Based Planning for World Models}

\author[1,2,\circ]{Michael Psenka}
\author[2]{Michael Rabbat}
\author[1]{Aditi Krishnapriyan}
\author[2,3,*]{Yann LeCun}
\author[2,*]{Amir Bar}

\affiliation[1]{University of California, Berkeley}
\affiliation[2]{Meta FAIR}
\affiliation[3]{New York University}

\date{\today}
\correspondence{Michael Psenka at \email{psenka@eecs.berkeley.edu}}

\contribution[*]{Equally Advised}
\contribution[\circ]{Work done at Meta}

\abstract{
World models simulate environment dynamics from raw sensory inputs like video. However, using them for planning can be challenging due to the vast and unstructured search space. We propose a robust and highly parallelizable planner that leverages the differentiability of the learned world model for efficient optimization, solving long-horizon control tasks from visual input. Our method treats states as optimization variables (``virtual states’’) with soft dynamics constraints, enabling parallel computation and easier optimization. To facilitate exploration and avoid local optima, we introduce stochasticity into the states. To mitigate sensitive gradients through high-dimensional vision-based world models, we modify the gradient structure to descend towards valid plans while only requiring action-input gradients. Our planner, which we call \textbf{GRASP} (Gradient RelAxed Stochastic Planner), can be viewed as a stochastic version of a non-condensed or collocation-based optimal controller. We provide theoretical justification and experiments on video-based world models, where our resulting planner outperforms existing planning algorithms like the cross-entropy method (CEM) and vanilla gradient-based optimization (GD) on long-horizon experiments, both in success rate and time to convergence.
}

\metadata[Project page]{\url{https://michaelpsenka.io/grasp}}
\begin{document}

\maketitle

\vspace{-2em}
\section{Introduction}

\begin{figure*}
    \centering
    {\includegraphics[width=1\linewidth]{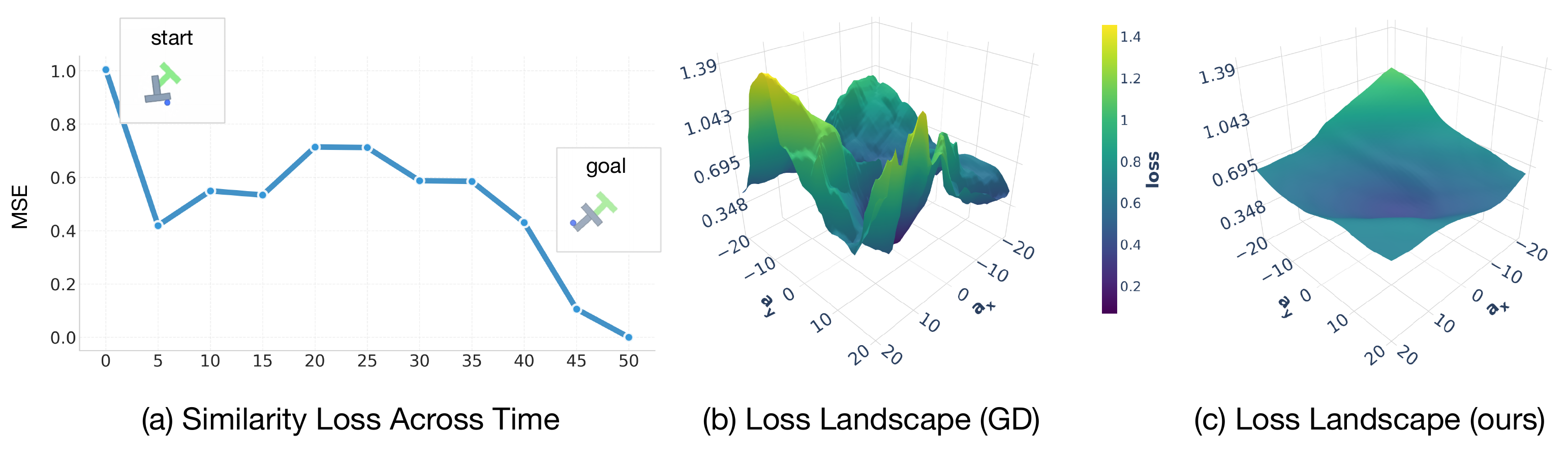}}
    \caption{\emph{Difficulty of the planning problem}. Subfigure \textbf{(a)} shows the distance to the goal in $L^2$ norm throughout a successful trajectory. This illustrates the difficulty of planning optimization away from a minimizer: successful trajectories often have to first move away from the goal to successfully plan towards it later, resulting in greedy strategies failing. Subfigures \textbf{(b)-(c)} depict the loss landscape at convergence of standard rollout-based planners vs. our planner. The example given is in the Push-T environment at horizon length 50. The axes plotted over are with respect to two random, orthogonal, unit-norm directions in the full action space $\R^{50 \times 2}$. Our planner loss is taken as in \eqref{eq:loss_full}, and for GD the loss is taken as in \eqref{eq:planning_loss_main}.}
    \label{fig:loss_landscape}
\end{figure*}

Intelligent agents carry a small-scale model of external reality which allows them to simulate actions, reason about their consequences, and choose the ones that lead to the best outcome. Attempts to build such models date back to control theory, and in recent years researchers have made progress in building world models using deep neural networks trained directly from the raw sensory input (e.g., vision). For example, recent world models have shown success modeling computer games~\citep{valevski2024diffusion}, navigating real-world environments~\citep{bar2025navigation,genie3}, and robot arm motion commands~\citep{goswami2025world}. World models have numerous impactful applications from simulating complex medical procedures~\citep{koju2025surgical} to testing robots in visually realistic environments~\citep{guo2025ctrl}.

Current planning algorithms used with world models often rely on 0\textsuperscript{th}-order optimization methods such as the Cross-Entropy Method (CEM~\cite{rubinstein2004cross}) or in general \emph{shooting methods} which plan by iteratively rolling out trajectories and choosing actions from the optimal rollout~\citep{bock1984multiple,piovesan2009randomized}. These approaches are simple and robust, but their performance degrades with longer planning horizons and higher action dimensionality~\citep{bharadhwaj2020model}, motivating the use of gradient information when differentiable world models are available.

Gradient-based planners exploit the differentiability of learned world models to directly optimize action sequences, enabling more sample-efficient planning and finer-grained improvement than zero-order methods. However, there are two main challenges to this optimization approach: local minima~\citep{jyothir2023gradient} and instability due to differentiating through the full rollout, akin to backpropagation through time~\citep{werbos2002backpropagation}. While prior methods have formulated the optimization for better conditioning (e.g., multiple shooting and direct collocation~\citep{von1993numerical}), these techniques are typically developed for known dynamical systems and do not scale as well to deep neural dynamics (often in latent spaces) with brittle or poorly calibrated Jacobians.

In this work, we introduce a novel gradient-based planning method for learned world models that decouples temporal dynamics into parallel optimized states (rather than serial rollouts) while remaining robust at long horizons and in high-dimensional state spaces. Rather than planning exclusively through a deep, sequential rollout of the dynamics model, our approach optimizes over \emph{lifted intermediate states} that are treated as independent optimization variables. Our approach makes two fundamental additions that help solve issues for gradient-based planning in higher dimensions: gradient sensitivity and local minima.

Firstly, a fundamental difficulty arises in the setting of vision-based world models. In high-dimensional learned state spaces, gradients with respect to state inputs can be brittle or adversarial, allowing the optimizer to exploit sensitive Jacobian structure rather than discovering physically meaningful transitions. To mitigate this issue, our planner deliberately stops gradients through the state inputs of the world model, while retaining gradients with respect to actions, which we find behave more reasonably. This alone would promote trajectories near the starting state, but with a dense one-step goal loss over the full trajectory, converged trajectories for these noisy iterations tend towards the goal.

Secondly, to address the remaining non-convexity of the lifted state approach, our planner incorporates Langevin-style stochastic updates on the lifted state variables, explicitly injecting noise during optimization to promote exploration of the state space and facilitate escape from unfavorable basins. This stochastic relaxation allows the planner to search over diverse intermediate trajectories while still favoring solutions that approximately satisfy the learned dynamics. Finally, we intermittently apply a small GD step to fine-tune stochastically optimized trajectories towards fully-optimized paths.

Together, these components yield a practical gradient-based planner for learned visual dynamics that remains stable at long horizons while avoiding the failure modes commonly encountered when backpropagating through deep world-model rollouts. We call our planner \textbf{GRASP} (Gradient RelAxed Stochastic Planner) to emphasize its primary components: using gradient information, relaxing the dynamics constraints, and stochastic optimization for exploration. In various settings, we achieve up to +10\% success rate at less than half the compute time cost. We also provide a theoretical model for our planner to further illustrate its role. We demonstrate our planner on visual world models trained on problems in D4RL~\citep{fu2020d4rl} and the DeepMind control suite~\citep{tassa2018deepmind}.

\section{Problem formulation}

\begin{figure*}[t]
    \centering
    \includegraphics[width=\linewidth]{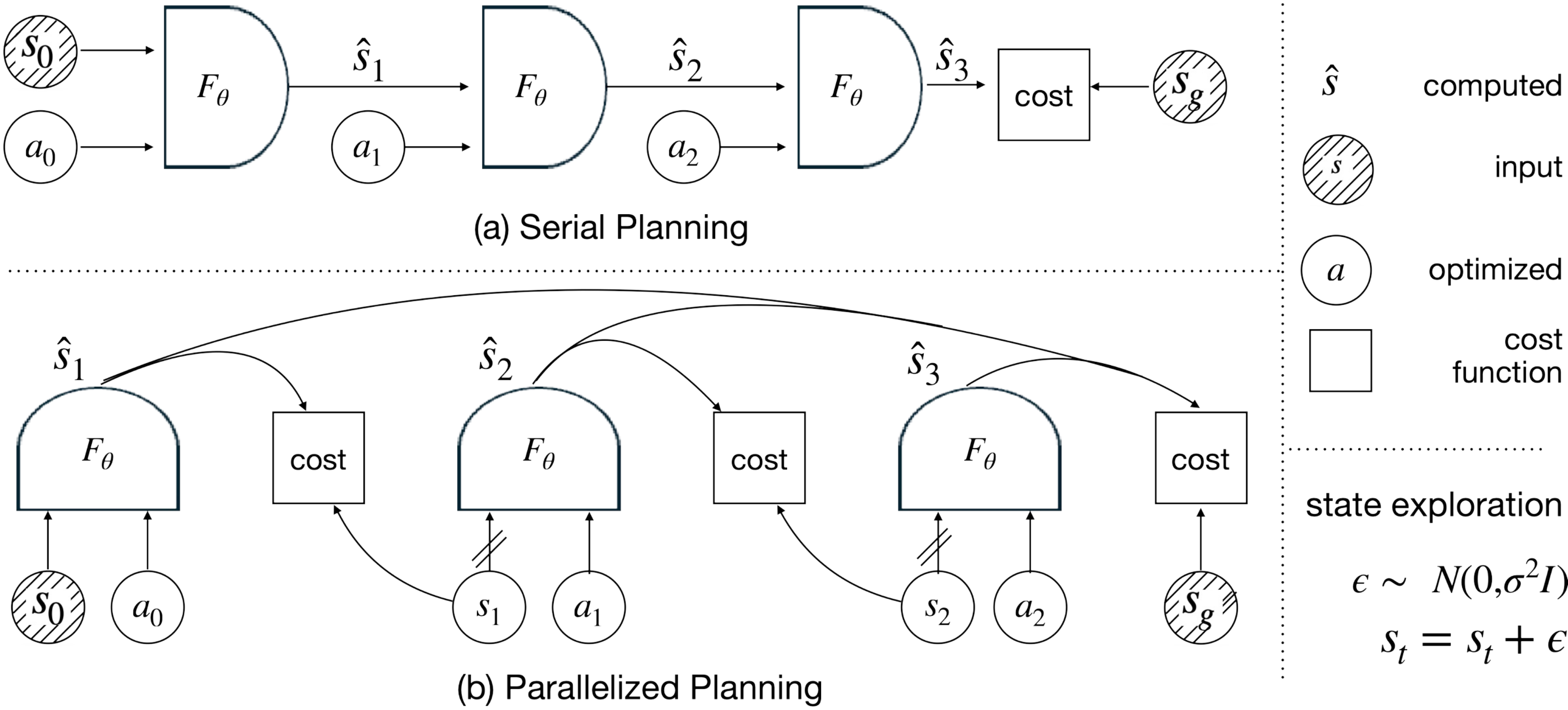} 
    \caption{Graphical depiction of \textbf{(a)} a standard serial-based setup for optimization-based planning, where states are rolled out using the actions and the loss is evaluated on the goal state, \textbf{(b)} our setup, which parallelizes the world model evaluations by optimizing ``virtual states'' directly and only supervising pairwise dynamics satisfaction. The crossed lines and skipped connections for our method's depiction \textbf{(b)} are detailed in Section \ref{sec:no_state_grads}, which keeps the full planning graph connected while not requiring state gradients of the dynamics $F_\theta$. For our planner, we find it helpful to alternate between \textbf{(a)} and \textbf{(b)} throughout the planning optimization.}
    \label{fig:main}
\end{figure*}

Our main object of interest is a learned world model $F_\theta: \mathcal{S} \times \mathcal{A} \rightarrow \mathcal{S}$ that predicts the next state given the current state and action. For visual domains, states are typically represented in a learned latent space to handle high-dimensional observations. Here we assume $\mathcal{A} = \R^k$ is a continuous Euclidean action space.

We consider the problem of fixed-goal path planning: finding an action sequence $\mathbf{a} = (a_0, a_1, \ldots, a_{T-1})$ that, with respect to the dynamics of the world model $ F_\theta$ and a given initial state $s_0 \in \mc S$, reaches a set goal state $g \in \mc S$:
\begin{equation}
    \mc F_\theta^T(\mb a, s_0) = g,
\end{equation}
where the terminal state $s_T$ is generated recursively through the update rule $s_{t+1} = F_\theta(s_t, a_t)$:
\newcommand{\forward}{F_\theta}
\begin{equation}
    \label{eq:forward}
    \mc{F}_\theta^T(\mathbf{a},s_0) \coloneqq \underbrace{\forward(\dots \forward(\forward(}_{\text{$T$ times}}s_0, a_0), a_1), \dots a_{T-1}).
\end{equation}
We can sufficiently compute $\mb a^*$ by solving the following optimization problem:
\begin{equation}\label{eq:planning_loss_main}
    \mb a^* = \argmin_{\mb a}\|\mc{F}_\theta^T(\mb a, s_0) - g\|_2^2.
\end{equation}
Optimizing \eqref{eq:planning_loss_main} directly is challenging due to two main problems. First, it requires $T$ applications of $\forward$ (see Eq.~\ref{eq:forward}), which is computationally expensive and difficult to optimize due to the poor conditioning arising from repeated applications of $F_\theta$ (see Appendix \ref{sec:theory_linear_convergence} for details). Second, it is susceptible to local minima and a jagged loss landscape---see Figure \ref{fig:loss_landscape}. Due to these reasons, existing planners are based on zero-order optimization algorithms like CEM and MPPI~\citep{williams2016aggressive} which are highly stochastic and do not require gradient computation. 

In what follows, we propose a gradient-based planner which alleviates these difficulties, while also using the differentiability of the model $\forward$. In Section~\ref{sec:admm}, we lift the optimization problem by also optimizing over states, which leads to faster convergence and better conditioning. In Section~\ref{sec:noise}, we introduce stochasticity, which helps escape local minima.

\section{Decoupling dynamics for gradient-based planning}
\label{sec:admm}

We consider planning with a world model $F_\theta$ and horizon $T$. Given an initial state $s_0 \in \mc S$ and goal state $g \in \mc S$, we optimize an action sequence $\mathbf{a}=(a_0,\dots,a_{T-1})$ such that the rolled-out state $s_T(\mathbf{a})$ is as close to $g$ as possible. A standard approach defines a trajectory by rolling out the model, but backpropagating through a deep composition of $F_\theta$ can be unstable and ill-conditioned. Following prior work on lifted planning~\citep{tamimi2009nonlinear,rybkin2021model}, we introduce auxiliary states $\mathbf{z}=(z_1,\dots,z_T)$ and enforce dynamics consistency through a penalty function.
\vspace{-0.5em}
\subsection{Parallelized planning}
\label{sec:lagrangian_new}
We first want to decouple the states from the explicit rollout outputs. Writing \eqref{eq:planning_loss_main} in terms of each intermediate dynamics condition, we get the following:
\begin{equation}
\begin{split}
\min_{\mb{s}, \mathbf{a}}\ \mathcal{L}_{\mathrm{dyn}}(\mathbf{s}, \mathbf{a}) \coloneqq
\sum_{t=0}^{T-1}\big\| F_\theta(s_t,a_t)-s_{t+1}\big\|_2^2,
\\
\text{with } s_0 \text{ fixed},\ s_T=g.
\end{split}
\label{eq:loss_dynamics}
\end{equation}
The minimization in \eqref{eq:loss_dynamics} is equivalent to \eqref{eq:planning_loss_main} in that they share global minimizers.

Immediately, this gives a great benefit in that \emph{all world model evaluations are parallel}; there is no need to do serial rollouts like what is required in \eqref{eq:planning_loss_main}. There are however two main issues with optimizing this loss directly:
\begin{enumerate}
    \item \emph{Local minima}. When optimizing with respect to states, the states might be stuck in an unphysical region; for example, Figure~\ref{fig:local_minimum_example} shows a case where states go straight through a barrier. To address this, we propose to use Langevin state updates which promote exploration (see Section~\ref{sec:noise}).
    \item \emph{World model sensitivity for high-dimensional states.} 
    When optimizing $s$ directly over a higher dimensional space (e.g. vision-based), we observe that the Jacobian $J_s  F_\theta (s, a)$ does not necessarily have any nice low-dimensional or convex structure; in practice, the world model can be easily steered toward outputting any desired output state, as depicted in Figure~\ref{fig:teleport_states}. We address this in Section~\ref{sec:no_state_grads} with a reshaping of the descent directions.
\end{enumerate}

We now describe our approach to address these two fundamental problems with lifted-states approaches to planning.

\subsection{Exploration via Langevin state updates}
\label{sec:noise}
The lifted optimization in \eqref{eq:loss_dynamics} is still non-convex and can get trapped in poor local minima. In practice, we frequently observe that deterministic joint updates in $(\mathbf{a},\mathbf{s})$ converge to ``bad'' stationary points where the intermediate variables settle into an unfavorable basin; for example, a linear route that ignores barriers or walls like in Figure~\ref{fig:local_minimum_example}. To circumvent this, we inject stochasticity directly into the state iterates, yielding a Langevin-style update that encourages exploration in the lifted state space.

\paragraph{Langevin dynamics on state iterates.}
Consider the optimization induced by \eqref{eq:loss_dynamics}. A standard way to escape spurious basins is to replace deterministic gradient descent on $\mathbf{s}$ with overdamped Langevin dynamics~\citep{gelfand1991recursive}, whose Euler discretization takes the following form:
\begin{align}
\label{eq:langevin_euler}
    s_t^{k+1}
    &\leftarrow
    s_t^{k} - \eta_s \nabla_{s_t}\mathcal{L}_{\mathrm{dyn}}(\mathbf{s}^{k}, \mathbf{a}^{k})
    +
    \sigma_{\text{state}}\xi_t^{k},\\
    a_t^{k+1}
    &\leftarrow
    a_t^{k} - \eta_a \nabla_{a_t}\mathcal{L}_{\mathrm{dyn}}(\mathbf{s}^{k}, \mathbf{a}^{k}),
\end{align}
where $\xi_t^{k}\sim\mathcal{N}(0,I)$. That is, each optimization step performs a gradient descent update on the intermediate states, followed by an isotropic Gaussian perturbation. Intuitively, the noise allows the iterates to ``hop'' between nearby basins of the lifted loss landscape.

\paragraph{Noise on states vs. actions.}
By only noising the states, we can still condition on more dynamically feasible trajectories, while still allowing exploration over a wider distribution. Intuitively, planning problems often have a single (or small number of) intermediate states to find for the solution, and being able to noise directly over states rather than actions allows us to find these intermediate states faster. See Appendix~\ref{sec:boltzmann_connection} for a characterization of the sampled distribution.

\begin{figure}[t]
    \centering
    
    \begin{subfigure}{0.45\linewidth}
      \centering
      \includegraphics[width=\linewidth]{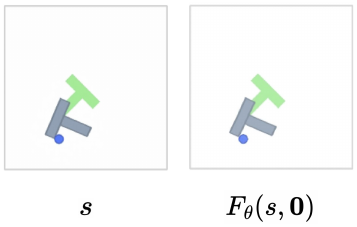}
      \caption{Expected behavior}
      
    \end{subfigure}\hspace{1.2em}
    \begin{subfigure}{0.45\linewidth}
      \centering
      \includegraphics[width=\linewidth]{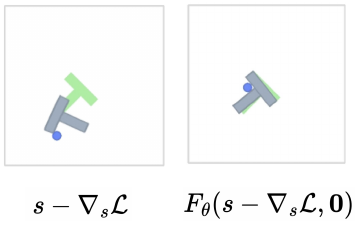}
      \caption{Adversarial example}
    
    \end{subfigure}
    \caption{\emph{Sensitivity of state gradient structure}. Examples of three states far away from the goal on the right (either in-distribution or out-of-distribution), such that taking a small step along the gradient $s' = s - \epsilon \nabla_s \mc L(s), \mc{L}(s) = \| F_\theta(s, a=\mb 0) - g\|_2^2$, leads to a nearby state $s'$ that solves the planning problem in a single step: $ F_\theta(s', \mb 0) = g$. Thus, optimizing states directly through the world model $ F_\theta$ can be quite challenging.}
    \label{fig:teleport_states}
\end{figure}
\subsection{Sensitivity to state gradients}
\label{sec:no_state_grads}
\paragraph{A note on adversarial robustness of state gradients.}
In practice, $F_\theta$ is learned and can have brittle local geometry. When optimizing \eqref{eq:loss_dynamics} by gradient descent in both $\mathbf{a}$ and $\mathbf{s}$, we observed empirically that gradients with respect to the state inputs, $\nabla_{s}F_\theta(s,a)$, can be exploited: for any local goal-reaching objective of the following form:
\begin{equation}
    \argmin_s \|y -  F_\theta(s, a)\|_2^2,
\end{equation}
instead of the optimizer learning to find an $s$ on-manifold such that applying the action $a$ leads to end state $y$, the optimizer can find a nearby ambient  $s + \delta$, $\|\delta\|_2 \ll 1$ such that the loss is practically minimized: $y \approx  F_\theta (s + \delta, a)$, \emph{regardless of the starting state $s$}. This is analogous to the adversarial robustness issue of image classifiers~\citep{szegedy2013intriguing, shamir2021dimpled}: high-dimensional input spaces for trained neural networks can have high Lipschitz constants, hindering optimization performance.

Unfortunately, any loss function over $\mb s$ and $\mb a$ whose minimizers are feasible dynamics must depend on the state gradient $\nabla_s F$ in a meaningful way. We provide the informal theorem here, with formalization and proof in Appendix \ref{sec:stopgrad}.
\begin{theorem}[informal]
    A differentiable loss function over state/action trajectories $\mc L: \mc S^T \times \mc A^T \to \R$ given a world model $F_\theta: \mc S \times \mc A \to \mc S$ cannot satisfy both of the following at the same time:
    \begin{enumerate}
        \item Minimizers of $\mc L$ correspond to dynamically feasible trajectories: $F_\theta(s_t, a_t) = s_{t+1}$,
        \item $\mc L$ is insensitive to the world model state gradient $\nabla_s F_\theta$.
    \end{enumerate}
\end{theorem}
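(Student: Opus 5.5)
The plan is to formalize ``insensitive to the state gradient'' as follows. $\mc L$ is built from the world model only through evaluations $F_\theta(s_t,a_t)$ and action-Jacobians, so that $\mc L(\mb s,\mb a) = \Phi\big(\mb s,\mb a, (F_\theta(s_t,a_t))_t\big)$ for some fixed differentiable $\Phi$. The gradient of $\mc L$ is then computed with $J_s F_\theta$ replaced by zero (a stop-gradient). Equivalently, the descent field $\nabla_{\mb s}\Phi$, with $F_\theta$ treated as a constant input, must not depend on $J_sF_\theta$. Property 1 I would formalize as: for every world model $F$ in a rich class (say all smooth $F$), the set of stationary points (or minimizers) of the induced descent field equals the set of feasible trajectories, $\{F(s_t,a_t)=s_{t+1}\ \forall t\}$. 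The proof will be by contradiction, exhibiting two world models that the descent field cannot distinguish but whose feasible sets differ.

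First I would fix a feasible trajectory $(\mb s^*,\mb a^*)$ for a model $F$. I would then perturb $F$ to $\tilde F(s,a) = F(s,a) + \phi(s)$, choosing $\phi$ smooth with $\phi(s^*_t)=0$ for all $t$ and $J\phi(s^*_t)\neq 0$ at some $t$. This keeps all values $\tilde F(s^*_t,a)=F(s^*_t,a)$ and all action-Jacobians $J_a\tilde F = J_aF$ identical along the trajectory, while the state Jacobians differ. By insensitivity, the two descent fields coincide at $(\mb s^*,\mb a^*)$. The difficulty is that both models still have $(\mb s^*,\mb a^*)$ as a feasible point, so no contradiction arises yet.

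The key step is therefore to work at an \emph{infeasible} point. I would take a point $(\mb s,\mb a)$ that is infeasible for $F$ and construct $\tilde F$ agreeing with $F$ in value and action-Jacobian at each $(s_t,a_t)$ but differing elsewhere, in such a way that $(\mb s,\mb a)$ is feasible for $\tilde F$. That fails as stated, because feasibility depends only on the values $F(s_t,a_t)$, which we keep fixed. So I would instead exploit the dependence on neighbourhoods. Property 1 for $F$ forces the gradient field to vanish exactly on the feasible manifold $M_F$. If the field depends on $F$ only pointwise through values and action-Jacobians, then its zero set is determined pointwise. Two models $F,\tilde F$ with equal values on a point set $P$ but different $J_s$ there have the same field on $P$. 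I would then take $\tilde F(s,a)=F(s,a)+\phi(s)$ with $\phi$ vanishing on the projections of $M_F$ onto states only at isolated points, so that $M_{\tilde F}\neq M_F$ near those points while values agree there. Comparing zero sets in a neighbourhood, where the fields agree only to first order, yields the contradiction.

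The main obstacle is exactly this last step: making ``pointwise insensitivity'' strong enough that the fields agree on open sets rather than at isolated points. I would first try the cleaner formal version. Suppose the descent direction at $(\mb s,\mb a)$ is any function $G\big(\mb s,\mb a, F(s_t,a_t), J_aF(s_t,a_t)\big)$. Then linearize around a feasible trajectory for a linear model $F(s,a)=As+Ba$. Stationarity of $G$ along all feasible directions (the tangent space of $M_F$, which involves $A$) must hold for every $A$. Since $G$ and its derivatives in the data see $A$ only through $As_t$, choosing $A,\tilde A$ with $As_t=\tilde As_t$ at the point but different tangent spaces shows that the Jacobian of $G$ cannot have kernel equal to both tangent spaces. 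This contradicts $M_F$ being exactly the stationary set, since nondegenerate stationarity pins the kernel to the tangent space.
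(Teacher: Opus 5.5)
\textbf{The gap is in your formalization of condition 2, and it cannot be repaired within that formalization.} If ``insensitive'' means that one descends along the stop-gradient field of $\Phi$, the condition constrains nothing, because every such field is free of $J_sF$ by construction. With Property 1 read as ``the zero set of that field equals the feasible set,'' the statement is in fact false. Take $T=2$, with free variables $s_1,a_0,a_1$, boundary $s_2=g$, and $y_0=F(s_0,a_0)$, $y_1=F(s_1,a_1)$. Define
\[
\Phi=\sum_{i=1}^n\Bigl[\tfrac13\,(s_{1,i}-y_{0,i})^3+s_{1,i}\,(y_{1,i}-g_i)^2\Bigr].
\]
The $s_1$-component of its stop-gradient field is $\bigl((s_{1,i}-y_{0,i})^2+(y_{1,i}-g_i)^2\bigr)_i$. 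This vanishes iff $s_1=F(s_0,a_0)$ and $F(s_1,a_1)=g$. The action components are $-J_aF(s_0,a_0)^\top\bigl((s_{1,i}-y_{0,i})^2\bigr)_i$ and $J_aF(s_1,a_1)^\top\bigl(2s_{1,i}(y_{1,i}-g_i)\bigr)_i$, and both vanish on the feasible set. So the zero set is exactly $\mathcal M(F)$ for every $C^1$ model $F$.

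This also shows why your last paragraph fails. The stop-gradient removes $J_sF$ from the \emph{value} of the field, but not from its Jacobian in $(\mb s,\mb a)$: differentiating $G(\mb s,\mb a,F(s_t,a_t),\dots)$ in $s_t$ produces the chain-rule term $\partial_yG\cdot A$. The linearizations for $A$ and $\tilde A$ are therefore different matrices, and nothing prevents each from having kernel equal to its own tangent space. ``Nondegenerate stationarity'' would be a further unjustified assumption; the example above is degenerate on $\mathcal M(F)$. Your middle paragraph's neighbourhood argument fails for the same reason: models that agree only at the points of $P$ give fields that differ at nearby points.

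\textbf{The missing idea is to impose insensitivity on the \emph{true} gradient of an honest loss.} This is how the paper proceeds. Take $L_F=\Phi(s_1,a_0,a_1,F(s_0,a_0),F(s_1,a_1))$ and require that $\nabla_{s_1}L_F$ coincide for any two models agreeing in value at $(s_0,a_0)$ and $(s_1,a_1)$. By the chain rule, $\nabla_{s_1}L_F=\partial_{s_1}\Phi+\nabla_sF(s_1,a_1)^\top\partial_{y_1}\Phi$. Bump-function surgery (the same value-matching, Jacobian-changing perturbation you had in mind) gives models with equal values but arbitrary, different state Jacobians at $(s_1,a_1)$. This forces $\partial_{y_1}\Phi\equiv 0$. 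So the loss cannot evaluate the model at the free intermediate state at all, and depends on $F$ only through $F(s_0,\cdot)$.

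The paper then constructs $F,G$ with $F(s_0,\cdot)=G(s_0,\cdot)$ but with swapped reachability of $g$ from two intermediate states $s_A,s_B$. These have identical losses and minimizers but different feasible sets, which is a contradiction. Your perturbation tool is the right one, but it only has force when it acts on a chain-rule term, and your stop-gradient formalization deletes that term by fiat. Note that the example above violates exactly the paper's condition: its true $s_1$-gradient carries the extra term $\nabla_sF(s_1,a_1)^\top\bigl(2s_{1,i}(y_{1,i}-g_i)\bigr)_i$.
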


To address this adversarial sensitivity, we detach gradients through the \emph{state inputs} of the world model, while still differentiating with respect to the actions. We denote by $\bar{s}_t$ a stop-gradient copy of $s_t$ (i.e., $\bar{s}_t=s_t$ in value, but treated as constant during differentiation).

\begin{figure*}[t]
    \centering
    \includegraphics[width=\textwidth]{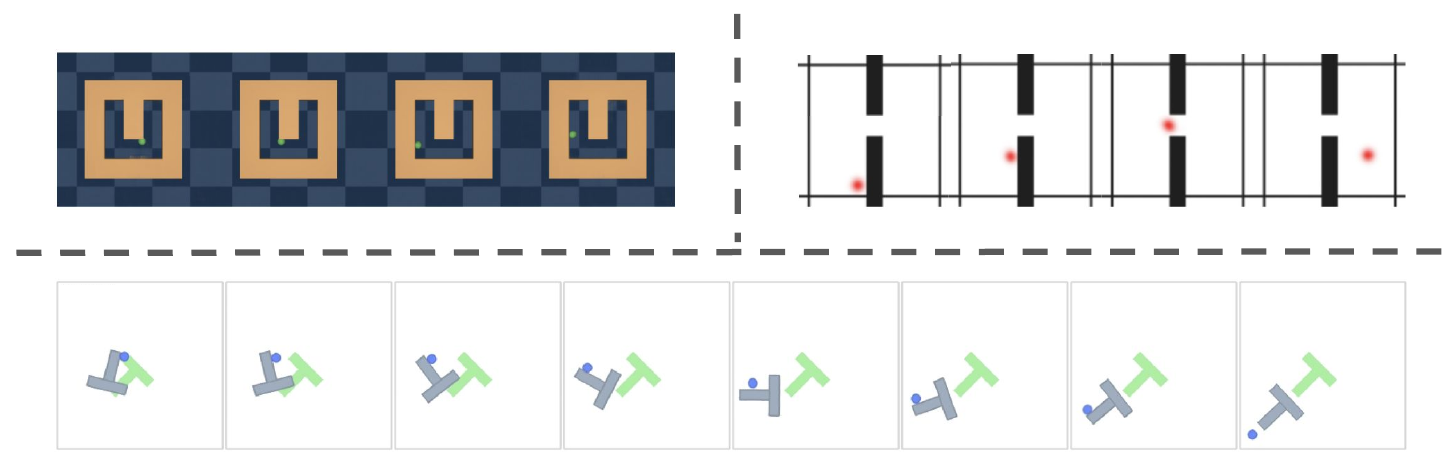}
    \caption{\emph{Virtual states learned through planning.} All examples are instantiations of our planner at horizon 50 in the Point-Maze, Wall-Single, and Push-T environments. Regardless of the dynamics constraint relaxation and state noising, directly optimized states find realistic, non-greedy paths towards the goal.}
    \label{fig:admm_vis}
\end{figure*}

  \begin{algobox}
    \footnotesize
    \begin{algorithmic}[1]
      \Require Initial observation $o_0$, goal $o_g$, world model $F_\theta$, horizon $T$, steps $K$, learning rates $\eta_a, \eta_s$.
      \Ensure $\mathbf{a}^*$
      \State $s_0 \gets \text{encode}(o_0)$, $s_T \gets \text{encode}(o_g)$
      \State $\mathbf{a} \gets \mathbf{a}^0$;\quad $\mathbf{s} \gets \text{init\_states}(s_0,s_T)$
      \For{$k=0$ \textbf{to} $K-1$}
        \State Compute $\mathcal{L}$ as in \eqref{eq:loss_full}
        \State \textbf{Joint step:} $(\mathbf{a},\mathbf{s}) \leftarrow (\mathbf{a},\mathbf{s}) - (\eta_a\nabla_{\mathbf{a}}\mathcal{L}, \eta_s\nabla_{\mathbf{s}}\mathcal{L})$
        \State \textbf{Stochastic state:} $s_t \leftarrow s_t + \sigma_{\text{state}}\xi_t, \xi_t\sim\mathcal{N}(0,I)$ for $t=1,\dots,T-1$
        \State \textbf{Sync (periodic):} if $k \bmod K_{\mathrm{sync}}=0$, rollout from $s_0$;\;\;$s_{t+1}\gets F_\theta(s_t,a_t)$ for $t=0,\dots,T-1$
        \State \hspace{3.2em} then take a GD step $\mathbf{a}\leftarrow \mathbf{a}-\eta_{\mathrm{sync}}\nabla_{\mathbf{a}}\|s_T-g\|_2^2$
      \EndFor
      \State \Return $\mathbf{a}^* \gets \mathbf{a}$
    \end{algorithmic}

    \medskip\hrule\medskip
    \captionsetup{type=algorithm}
    \captionof{algorithm}{GRASP planner: parallel stochastic gradient-based planning}
    \label{alg:no_state_grad_langevin}
  \end{algobox}

\paragraph{Grad-cut dynamics loss.}
We begin by applying a gradient stop to the state inputs in the dynamics loss:
\begin{equation}
\mathcal{L}_{\mathrm{dyn}}^{\mathrm{sg}}(\mathbf{s}, \mb{a})
  =  
\sum_{t=0}^{T-1}\big\|F_\theta(\bar{s}_t,a_t) - s_{t+1} \big\|_2^2.
\label{eq:dyn_penalty_sg}
\end{equation}
This objective is differentiable with respect to $\mathbf{a}$ and the \emph{next} states $s_{t+1}$, but does not backpropagate through $s_t$ via $F_\theta$.

\paragraph{Dense goal shaping on one-step predictions.}
While \eqref{eq:dyn_penalty_sg} improves robustness, it introduces a new degeneracy: paths gravitate towards the current rollout, regardless of proximity to the goal. To provide a task-aligned signal at every time step without state-input gradients, we add a goal loss on the one-step predictions:
\begin{equation}
\mathcal{L}_{\mathrm{goal}}^{\mathrm{sg}}(\mathbf{s}, \mb{a})
  =  
\sum_{t=0}^{T-1}
\big\| F_\theta(\bar{s}_t,a_t)-g \big\|_2^2.
\label{eq:goal_shaping_sg}
\end{equation}
This encourages each predicted next state to move toward the goal, supplying gradient information to every action $a_t$ while maintaining the grad-cut on $s_t$. This is depicted visually in Figure \ref{fig:main}, and theoretically in Appendix \ref{sec:stopgrad}. Crucially, due to the stop-gradient $\bar{s}_t$, gradients through $F_\theta(\bar{s}_t,a_t)$ flow only with respect to $a_t$ (and not $s_t$), which prevents the optimizer from exploiting adversarial state-input directions. The final energy that is sampled from is then the following:
\begin{align}\label{eq:loss_full}
\begin{split}
\mc L(\mb{s}, \mb{a}) = &\sum_{t=0}^{T-1}\big\|F_\theta(\bar{s}_t,a_t) - s_{t+1} \big\|_2^2\\
&\quad + \gamma \sum_{t=0}^{T-1}
\big\| F_\theta(\bar{s}_t,a_t)-g \big\|_2^2,
\end{split}
\end{align}
where $\gamma > 0$ is fixed.

\textbf{Resulting noisy dynamics.} The final resulting dynamics, after explicitly writing out $\nabla_a \mc L$, are as follows:
\begin{align}
\label{eq:noisy_dynamics_final}
    s_t^{k+1}
    &\leftarrow
    s_t^{k} - 2\eta_s \Bigl(s_t - F_\theta(s_{t-1}, a_{t-1}) \Bigr)
    +
    \sigma_{\text{state}}\xi_t^{k},\\
    a_t^{k+1}
    &\leftarrow
    a_t^{k} - \eta_a \nabla_{a_t}\mathcal{L}(\mathbf{s}^{k}, \mathbf{a}^{k}),
\end{align}
where $\xi_t^{k}\sim\mathcal{N}(0,I)$. Importantly, while the action dynamics still follow a gradient flow, the states do not follow a true gradient vector field, and thus \emph{the resulting dynamics are not Langevin.} What results are still noisy dynamics that bias towards valid goal-oriented trajectories, but whose efficiency will require an extra synchronization step as described in the following section.

\subsection{Full-rollout synchronization}
\label{sec:fullgrad_sync}

The no-state-gradient updates are designed to be robust to brittle state-input Jacobians of the learned world model. However, the stochastic optimization in \eqref{eq:noisy_dynamics_final} still needs a method of strict descent towards true minima. In practice, we found it beneficial to periodically ``sync'' the plan by briefly running standard full-gradient planning on the original rollout objective.
\paragraph{Full-gradient rollout step.}
Every $K_{\mathrm{sync}}$ iterations, we perform $J_{\mathrm{sync}}$ steps of gradient descent on the original planning loss
\begin{equation}\label{eq:sync_rollout_loss}
    \min_{\mathbf{a}}  \|s_T(\mathbf{a}, s_0) - g\|_2^2,
\end{equation}
where $s_T(\mathbf{a},s_0)$ is computed by sequentially rolling out the world model
\begin{equation}
s_{t+1} = F_\theta(s_t, a_t), \qquad s_0 \text{ given}.
\end{equation}
During this synchronization phase we update only the actions,
\begin{equation}
\mathbf{a} \leftarrow \mathbf{a} - \eta_{\mathrm{sync}} \nabla_{\mathbf{a}} \|s_T(\mathbf{a}, s_0) - g\|_2^2,
\label{eq:sync_action_step}
\end{equation}
using full backpropagation through the $T$-step rollout. By keeping these GD steps small relative to the stochastic dynamics of \eqref{eq:noisy_dynamics_final}, we benefit from the smoothed loss landscape in Figure~\ref{fig:loss_landscape}c for wider exploration, and the sharp but brittle landscape in Figure~\ref{fig:loss_landscape}b for refinement.

\section{Results}

We evaluate our proposed planner GRASP across two complementary classes of environments designed to test (i) nonconvex long-horizon planning with obstacles and (ii) data-driven visual control under learned dynamics. Concretely, these experiments aim to answer three questions:
\begin{enumerate}
    \item Can the proposed planner overcome the greedy local minima that often trap shooting methods?
    \item Does the method remain robust as the planning horizon increases?
    \item Does the proposed planner converge to plans faster than rollout-based planners?
\end{enumerate}
We provide self-ablations in Table~\ref{tab:ablation_results}, demonstrating the value of various components of our planner: using the state gradient detaching, the GD sync steps, and the level of the noise.

\begin{table*}[t]
\centering
\begin{tabular}{l
  *{5}{r@{\hspace{0.35em}/\hspace{0.35em}}l}
}
\hline
& \multicolumn{2}{c}{$H{=}40$}
& \multicolumn{2}{c}{$H{=}50$}
& \multicolumn{2}{c}{$H{=}60$}
& \multicolumn{2}{c}{$H{=}70$}
& \multicolumn{2}{c}{$H{=}80$} \\
& \multicolumn{2}{c}{Succ.\ / Time}
& \multicolumn{2}{c}{Succ.\ / Time}
& \multicolumn{2}{c}{Succ.\ / Time}
& \multicolumn{2}{c}{Succ.\ / Time}
& \multicolumn{2}{c}{Succ.\ / Time} \\
\hline
CEM   & \textbf{61.4} & 35.3s & 30.2 & 96.2s & 7.2 & 83.1s & 7.8 & 156.1s & 2.8 & 132.2s \\
\noalign{\vskip 0.3em}
\hdashline
\noalign{\vskip 0.3em}
LatCo & 15.0 & 598.0s & 4.2 & 1114.7s & 2.0 & 231.5s & 0.0 & --- & 0.0 & --- \\
GD    & 51.0 & 18.0s  & 37.6 & 76.3s   & 16.4 & 146.5s & 12.0 & 103.1s & 6.4 & 161.3s \\
GRASP (Ours)  & 59.0 & \textbf{8.5s} & \textbf{43.4} & \textbf{15.2s} & \textbf{26.2} & \textbf{49.1s} & \textbf{16.0} & \textbf{79.9s} & \textbf{10.4} & \textbf{58.9s} \\
\hline
\end{tabular}
\caption{Open-loop planning results on long range Push-T. Reported are success rate (\%) and median success time (seconds; successful trials only) across planning horizons. 500 trials per setting. Each cell reports \emph{Success / Time}.}
\label{tab:pusht_horizon}
\end{table*}

Figure~\ref{fig:admm_vis} visualizes planning iterations in several navigation environments, illustrating how trajectories initialized far from dynamically consistent rollouts converge to feasible plans that satisfy the learned dynamics.
\subsection{Baselines}
We compare against three commonly used planners. \textbf{CEM} optimizes action sequences by iteratively sampling candidate trajectories, selecting elites, and refitting a sampling distribution. \textbf{GD} directly optimizes the action sequence by backpropagating through the dynamics model. \textbf{LatCo}~\citep{rybkin2021model} optimizes in a lifted latent/state-space by jointly adjusting intermediate latent variables and actions. This setting is different than the original LatCo method, which was originally applied in a model-based RL environment, but it still provides an important baseline for performance if we were to purely focus on direct optimization of \eqref{eq:loss_dynamics}.
\begin{figure}[t]
  \centering
  
    \begin{subfigure}{0.49\linewidth}
      \centering
      \includegraphics[width=\linewidth]{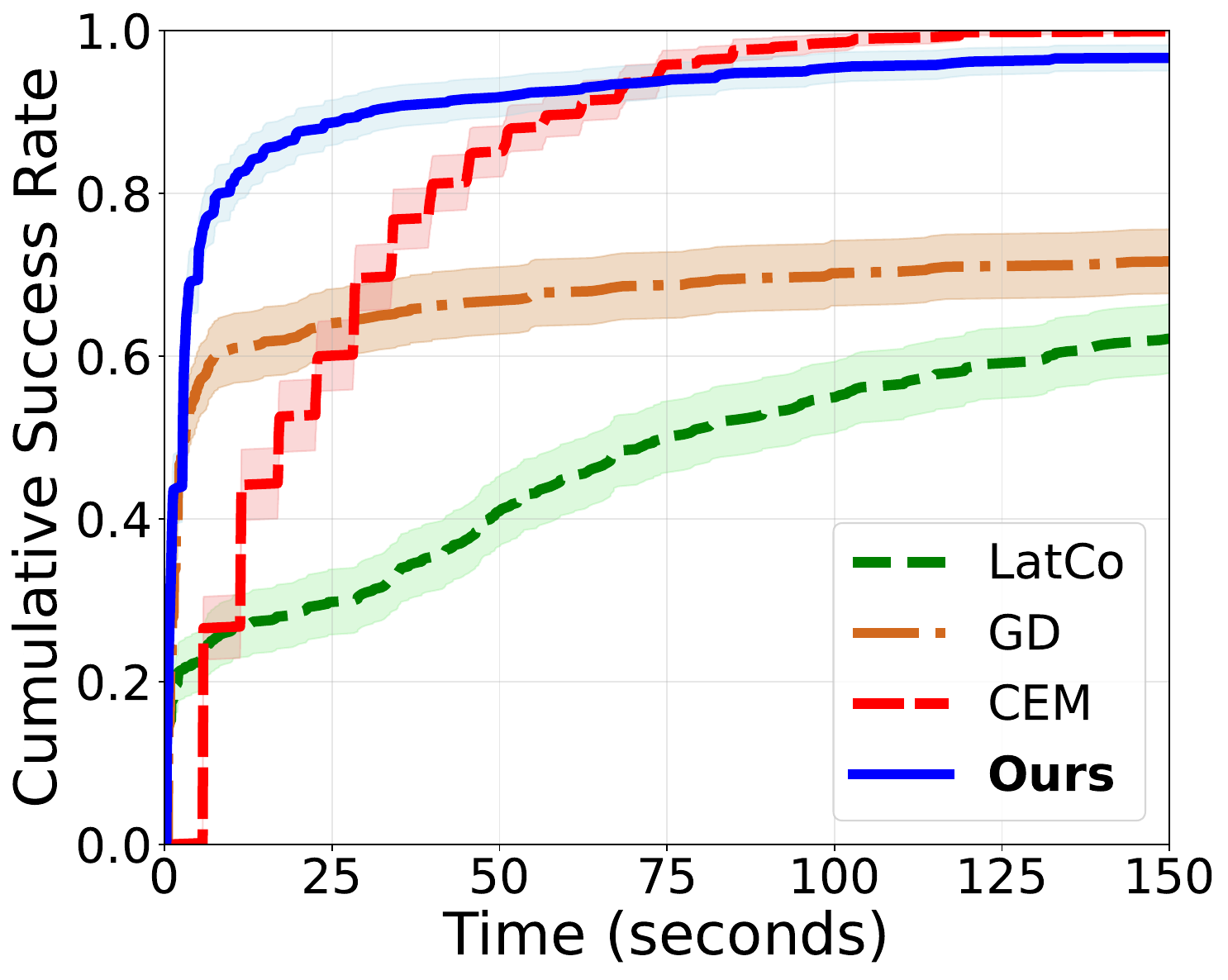}
      \caption{Point Maze ($H=50$)}
      
    \end{subfigure}\hfill
    \begin{subfigure}{0.49\linewidth}
      \centering
      \includegraphics[width=\linewidth]{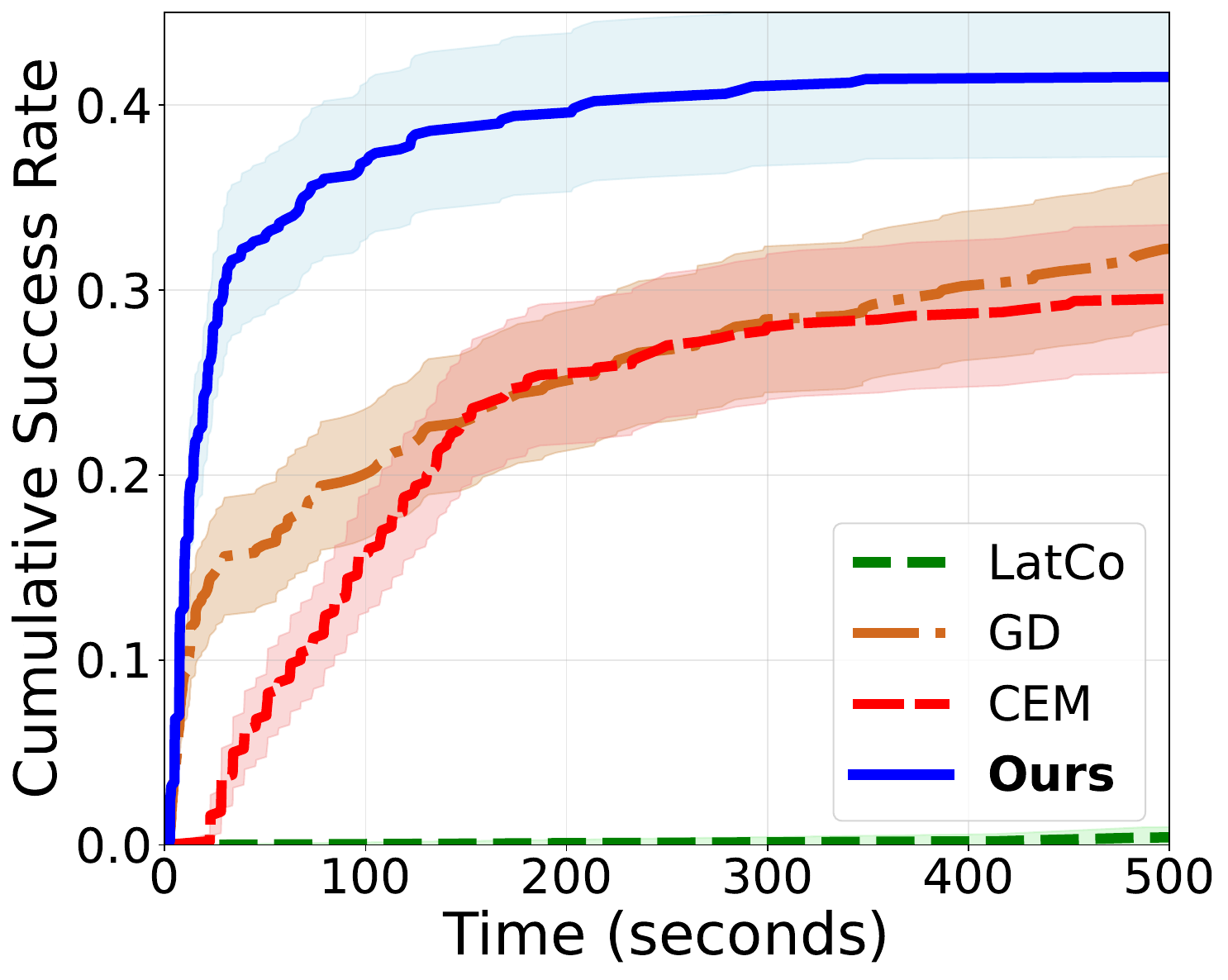}
      \caption{Push-T ($H=50$)}
    
    \end{subfigure}
  \caption{\textbf{Success rate over time at a fixed horizon.} Success rate over fixed set of \emph{open-loop} planning tasks for CEM, GD, LatCo~\citep{rybkin2021model}, and our planner for a fixed horizon of 50. Curves summarize how quickly each planner makes progress under the learned world model setting when evaluated at a fixed planning horizon. Shaded regions are Wald 95\% confidence intervals.}
  \label{fig:cumulative_success}
\end{figure}

For all methods, we sweep over hyperparameters and report results using the best-performing setting for each environment and horizon. For our planner, we initialize the states $\{s_t\}_{t=0}^{T}$ as noised around the linear interpolation between $s_0$ and $g$: $s_t = \frac{t}{T}g + (1 - \frac{t}{T})s_0 + z$, $z \sim \mc{N}(0, \epsilon I)$, and actions initialized at zeros: $a_t = 0$.

\subsection{Environments and evaluation protocol}
We evaluate planning on three visual control environments with learned dynamics: \emph{PointMaze}, \emph{Wall-Single}, and \emph{Push-T}. World models are trained using the DINO-wm framework~\citep{zhou2024dino}, following the original paper's setup, where the world model $F_\theta(s, a)$ takes 5 actions and predicts 5 steps ahead; that is, if $\mr{dim}(\mc A) = 2$, then $F_\theta$ takes actions as vectors of stacked actions $\mb{a}\in \R^{10}$.

All reported metrics measure task success under the learned world model. Success is defined as reaching the goal region within the planning horizon.

\subsection{Long-term planning and horizon scaling}
We evaluate planners in the long-horizon regime, where our parallelized stochastic planner is more intended; greedy local minima and optimization instability become the dominant challenges.

\begin{table}[t]
\centering
\caption{Ablation Studies over the GD sync steps, level of noise for Langevin dynamics, and whether we use our detached gradient approach with the goal-reaching objective or not. GD sync happens every 100 stochastic steps. Ablations done on the Push-T environment at horizon $H=40$. Time reported is median over successful trials, which only beats our method when the accuracy is much smaller.}
\begin{tabular}{lcc}
\toprule
\textbf{Setting} & \textbf{Accuracy (\%)} & \textbf{Time (s)} \\
\midrule
\multicolumn{3}{l}{\textit{Sync Steps}} \\
No Sync & 1.6  & 4.5 \\
10 & 48.0 & 11.2 \\
25 (ours) & \textbf{59.0} & 8.5 \\
50 & 58.0 &  9.8 \\
\midrule
\multicolumn{3}{l}{\textit{Noise Level}} \\
$\sigma=0.0$ & 54.8  & 10.4 \\
$\sigma=0.5$ (ours) & \textbf{59.0} & 8.5 \\
$\sigma=1.0$ & 50.6  &  8.4 \\
\midrule
\multicolumn{3}{l}{\textit{Detach  $\nabla_{s}F(s, a)$ }} \\
Stop (ours) & \textbf{59.0}  & 8.5 \\
Flow & $46.6$  & 10.3 \\
\bottomrule
\end{tabular}
\label{tab:ablation_results}
\end{table}

GRASP remains reliable as the planning horizon increases: it solves more tasks and finishes a majority of its successful trials faster, showing stronger robustness to long horizons than the baselines as shown in Table~\ref{tab:pusht_horizon}. Beyond the median completion times, we provide further illustration of the solving speed of our planner in Figure~\ref{fig:cumulative_success}, showing further that most of its plans converge at an earlier time. At the longer horizons for Push-T, where it is more important for the planner to explore non-greedy optima, GRASP finds more success than the baselines and is able to find the needed non-greedy trajectories, as visualized in Figure~\ref{fig:pusht_long}.

\begin{table*}[t]
\centering
\begin{tabular}{lccc|ccc|ccc}
\hline
& \multicolumn{3}{c|}{Push-T} 
& \multicolumn{3}{c|}{PointMaze} 
& \multicolumn{3}{c}{WallSingle} \\
Planner 
& $H{=}10$ & $H{=}20$ & $H{=}30$
& $H{=}10$ & $H{=}20$ & $H{=}30$
& $H{=}10$ & $H{=}20$ & $H{=}30$ \\
\hline
CEM & \textbf{100.0} & \textbf{96.0} & \textbf{76.0} & \textbf{100.0} & \textbf{100.0} & \textbf{100.0} & \textbf{98.0} & \textbf{100.0} & \textbf{100.0} \\
\noalign{\vskip 0.3em}
\hdashline
\noalign{\vskip 0.3em}
LatCo & 67.4 & 55.2 & 35.4 & \textbf{100.0} & 99.8 & 99.0 & 91.4 & 81.6 & 62.8 \\
GD & 99.2 & 88.4 & 69.0 & 94.6 & 91.8 & 94.6 & 91.8 & 92.8 & 91.6 \\
GRASP (Ours) & \textbf{100.0} & 89.2 & 75.2 & \textbf{100.0} & \textbf{100.0} & 99.8 & 95.2 & 91.8 & 95.0 \\
\hline
\end{tabular}
\caption{Short Term Planning. Success rate (\%) for Push-T, PointMaze, and WallSingle. 500 trials per setting. Our method has comparable success rate while having a consistently low completion time (Table \ref{tab:short_times}).}
\label{tab:short_success}
\end{table*}
\begin{table*}[ht]
\centering
\begin{tabular}{lccc|ccc|ccc}
\hline
& \multicolumn{3}{c|}{Push-T} 
& \multicolumn{3}{c|}{PointMaze} 
& \multicolumn{3}{c}{WallSingle} \\
Planner 
& $H{=}10$ & $H{=}20$ & $H{=}30$
& $H{=}10$ & $H{=}20$ & $H{=}30$
& $H{=}10$ & $H{=}20$ & $H{=}30$ \\
\hline
CEM & 1.3 & 7.5 & 23.6 & 0.7 & 2.0 & 6.3 & 0.9 & 1.5 & 2.1 \\
\noalign{\vskip 0.3em}
\hdashline
\noalign{\vskip 0.3em}
LatCo & 39.9 & 437.2 & 1800.0 & 1.6 & 13.7 & 36.6 & 12.1 & 23.8 & 111.1 \\
GD & \textbf{0.5} & 2.4 & 25.3 & \textbf{0.3} & \textbf{0.6} & \textbf{1.2} & \textbf{0.4} & \textbf{0.5} & \textbf{0.7} \\
GRASP (Ours) & 1.5 & \textbf{2.1} & \textbf{9.1} & 0.7 & 1.8 & 2.1 & 1.6 & 1.8 & 2.1 \\
\hline
\end{tabular}
\caption{Median completion times (seconds) across short-term experiments. 500 trials per setting. Our method has a consistently low completion time with a comparable success rate (Table \ref{tab:short_success}).}
\label{tab:short_times}
\end{table*}

\subsection{Short-term planning}
We also evaluate short-horizon planning, to demonstrate that our planner can match performance on shorter, easier tasks. Table~\ref{tab:short_success} reports success rates across environments for horizons ranging from $H=10$ to $H=30$, while Table~\ref{tab:short_times} reports median wall-clock planning times.

Across all environments and short horizons, the proposed planner achieves success rates comparable to the baselines. Alongside similar success rates, our proposed planner exhibits consistently low planning times. As shown in Table~\ref{tab:short_times}, it is among the fastest methods across all environments and horizons, often significantly faster than sampling-based approaches and competitive with gradient-based optimization, sacrificing some speed for a higher success rate. These results indicate that even in relatively short and easy planning regimes, our planner remains competitive with the baselines.

Overall, these results demonstrate that GRASP consistently matches strong baselines in short-term planning, while outperforming them in long-horizon settings by avoiding greedy failures and converging more quickly in wall-clock time.

\section{Related work}
\textbf{World modeling} has shown significant improvement in sample efficiency for model-based reinforcement learning~\citep{hafner2025mastering}. By learning to predict future states given current states and actions, world models enable planning without access to an interactive environment~\citep{ding2024understanding}. Recent work has focused on learning latent-space representations to handle high-dimensional observations~\cite{assran2025v}, with models now demonstrating the ability to scale and generalize across diverse environments~\citep{bar2025navigation}. In this paper, we develop an efficient planning algorithm for action-conditioned video models.

\textbf{Sampling-based planning in world models} traditionally relies on methods like the Cross-Entropy Method (CEM~\cite{rubinstein2004cross}) and random shooting. While these methods are robust and simple to implement, they suffer from serial evaluation bottlenecks and poor scaling with planning horizon length~\citep{bharadhwaj2020model}. Recent work proposes performance improvements—such as faster CEM variants with action correlation and memory, parallelized sampling via diffeomorphic transforms, and massively parallel strategies—but fundamental limitations remain for very long horizons~\citep{pinneri2021sample,lai2022parallelised}.

\textbf{Gradient-based planning} leverages the differentiability of neural world models to optimize action sequences directly~\citep{jyothir2023gradient}. Early approaches applied backpropagation through time to optimize actions~\citep{thrun1990planning}, but face challenges with vanishing/exploding gradients and poor conditioning over long horizons. Hybrid strategies combining gradient descent with sampling-based methods—such as interleaving CEM with gradient updates—have shown promise. CEM-GD variants interleave backwards passes through the learned model with population-based search for improved convergence and scalability~\citep{bharadhwaj2020model,huang2021cem}. Recent work improves gradient-based planners by training world models to be adversarially robust to improve gradient-based planning~\citep{parthasarathy2025closing}; our method instead tries to improve gradient-based planning for more general world models, without any pretraining modifications.

\textbf{State optimization and multiple shooting in optimal control.} The idea of treating states as optimization variables separate from dynamics constraints has a rich history in classical optimal control. Non-condensed QP formulations in MPC~\citep{jerez2011condensed} decouple state and input optimization for improved numerical properties. Multiple shooting methods~\citep{tamimi2009nonlinear,diedam2018global} break long-horizon problems into shorter segments with continuity constraints. Direct collocation approaches~\citep{bordalba2022direct,nie2025reliable} optimize state and control trajectories simultaneously while enforcing dynamics through collocation constraints. These methods have primarily been applied to systems with known analytical dynamics. Trajectory optimization methods in robotics have developed parallel shooting techniques and GPU-accelerated planning algorithms~\citep{guhathakurta2022fast}, but most approaches still face fundamental limitations when applied to learned world models, particularly visual world models where dynamics are approximate and high-dimensional.

\textbf{Noise and regularization in optimization.} Stochastic optimization techniques and noise injection have long been recognized for their ability to improve optimization outcomes~\citep{robbins1951stochastic}, and can help regularize and explore complex loss landscapes~\citep{welling2011bayesian, xu2018global, bras2023langevin, foret2020sharpness}. In the context of planning, noise is commonly used in sampling-based methods, but its systematic incorporation into gradient-based planning for learned models remains underexplored.
\section{Limitations and future work}

Although the proposed planner shows clear advantages in long-horizon settings, its benefits are more limited at short horizons. As demonstrated in our experiments, for small planning horizons the planner typically achieves success rates and completion times that are comparable to strong baselines such as CEM and gradient-based optimization, rather than strictly outperforming them. This suggests that the primary gains of the method arise in regimes where long-horizon reasoning and non-greedy planning are essential, rather than in short-horizon settings where simpler methods already perform well.

Hybrid planners (that combine iterations of a rollout-based planner like CEM and a gradient-based planner like GD) have been implemented to get the ``best of both worlds'' from the two approaches~\citep{huang2021cem}, and there are many ways to ``hybridize'' our planner as well. We leave exploration of such methods for future work.

Several components of the planner are designed to mitigate the unreliability of state gradients in learned world models. While effective, these modifications introduce additional structure and hyperparameters that would ideally be unnecessary. If state representations induced by the world model were smoother or more geometrically well-behaved in the state space, many of these stabilization mechanisms could be removed, potentially leading to further speed improvements. Promising directions toward this goal include improved representation learning through adversarial training, diffusion-based world models, or other techniques that explicitly regularize the geometry of the learned state space.

\section{Conclusion}

World models provide a powerful framework for planning in complex environments, but existing approaches struggle with long horizons, high-dimensional actions, and serial computation. We propose GRASP, a new gradient-based planning algorithm with two key contributions: (a) a lifted planner that optimizes actions together with ``virtual states'' in a time-parallel manner, yielding more stable and scalable optimization while allowing direct control over exploration via stochastic state updates, and (b) an action-gradient-only planning variant for learned visual world models that avoids brittle state-input gradients while still exploiting differentiability with respect to actions. Experiments on visual world-model benchmarks show that our approach remains robust as horizons grow and finds non-greedy solutions at a faster rate than commonly used planners such as CEM or vanilla GD.

\newpage
\bibliographystyle{assets/plainnat}
\bibliography{ref}

\newpage
\onecolumn
\beginappendix
\section{Theory}

In this section, we provide a theoretical analysis of the convergence properties of our planning approach compared to traditional shooting methods. We consider a simplified linear dynamics setting to derive formal convergence guarantees. We reproduce proof here for self-containedness; see e.g.~\cite{ascher1995numerical} for other theoretical treatments of this problem.

\subsection{Convergence of various methods in the convex setting}\label{sec:theory_linear_convergence}

Consider a linear dynamical system with dynamics:
\begin{equation}\label{eq:app_proof_lindynamics}
s_{t+1} = As_t + Ba_t,
\end{equation}
where $s_t \in \mathbb{R}^n$ is the state at time $t$, $a_t \in \mathbb{R}^m$ is the control input, and $A \in \mathbb{R}^{n \times n}$, $B \in \mathbb{R}^{n \times m}$.

Let $\mathcal{F}(s_0, \mathbf{a}): \mathbb{R}^n \times \mathbb{R}^{mT} \to \mathbb{R}^n$ denote the rollout of the dynamics for $T$ timesteps. Under linear dynamics, $\mathcal{F}$ takes the compact form:
\begin{equation}
\mathcal{F}(s_0, \mathbf{a}) = A^T s_0 + \sum_{t=0}^{T-1} A^{T-1-t} B a_t.
\end{equation}

We analyze the optimization landscape of two fundamental formulations for reaching a target state $g$ from initial state $s_{init}$.

The \emph{Shooting Method} minimizes the error at the final step relative to the control inputs $\mathbf{a}$, implicitly simulating the dynamics:
\begin{equation}
\min_{\mathbf{a} \in \mathbb{R}^{mT}} J_S(\mathbf{a}) = \|\mathcal{F}(s_{init}, \mathbf{a}) - g\|_2^2
\label{eq:shooting}
\end{equation}

The \emph{Lifted States Method} (or Multiple Shooting) treats both states and controls as optimization variables and minimizes the violation of the dynamics constraints (the physics defects):
\begin{align}
\min_{\mathbf{a}, \mathbf{s}} \quad J_L(\mathbf{a}, \mathbf{s}) &= \sum_{t=0}^{T-1} \|As_t + Ba_t - s_{t+1}\|_2^2 \\
\text{subject to} \quad &s_0 = s_{init}, \quad s_T = g
\label{eq:lifted}
\end{align}
where $\mathbf{s} = (s_0, \dots, s_T)$.

\textbf{Matrix Representation}
To analyze the convergence properties, we express both objectives in quadratic matrix form.

\paragraph{Shooting Method.}
Let $\mathbf{a} = (a_0; \dots; a_{T-1}) \in \R^{mT}$. The final state is linear in $\mathbf{a}$:
\begin{equation}
s_T = A^T s_{init} + \mathcal{C}_T \mathbf{a}
\end{equation}
where $\mathcal{C}_T = [A^{T-1}B, A^{T-2}B, \ldots, B] \in \mathbb{R}^{n \times mT}$ is the controllability matrix. The objective is:
\begin{equation}
J_S(\mathbf{a}) = \|\mathcal{C}_T \mathbf{a} - (g - A^T s_{init})\|_2^2
\end{equation}
The Hessian of this objective is $H_S = 2\mathcal{C}_T^\top \mathcal{C}_T$.

\paragraph{Lifted Method.}
We eliminate the fixed boundary variables $s_0$ and $s_T$ and optimize over the free variables $\mathbf{z} = (s_1; \dots; s_{T-1}; a_0; \dots; a_{T-1})$. The dynamics residuals can be written as a linear system $M\mathbf{z} - b$.
The dynamics equations for $t=0, \dots, T-1$ correspond to rows in a large matrix $M$.
\begin{itemize}
    \item For $t=0$: $s_1 - Ba_0 = As_{init}$ (Const. term moved to RHS)
    \item For $0 < t < T-1$: $As_t + Ba_t - s_{t+1} = 0$
    \item For $t=T-1$: $-As_{T-1} - Ba_{T-1} = -g$ ($s_T$ fixed to $g$)
\end{itemize}
The objective is $J_L(\mathbf{z}) = \|M\mathbf{z} - b\|_2^2$, and the Hessian is $H_L = 2M^\top M$.
The matrix $M$ has a sparse, block-banded structure.

\textbf{Smoothness Analysis}
We compare the smoothness of the two optimization problems by comparing the Lipschitz constants of their gradients, $L = \lambda_{\max}(H)$.

\begin{theorem}[Shooting: Exploding Smoothness]
Let $A^\top$ have a real eigenvalue $\lambda$ with $|\lambda| > 1$ and unit eigenvector $v$ (a left eigenvector of $A$). Assume $B$ aligns with this mode such that for some input direction $w$ ($\|w\|_2 =1$), the projection $|\langle v, Bw \rangle| = \mu > 0$.

Then, the Lipschitz constant of the Shooting method gradient grows exponentially with $T$:
\begin{equation}
L_S = \lambda_{\max}(H_S) \geq 2\mu^2 |\lambda|^{2(T-1)}.
\end{equation}
\end{theorem}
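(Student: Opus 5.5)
The plan is to lower-bound $\lambda_{\max}(H_S)$ by a Rayleigh quotient at one well-chosen test vector. Since $H_S = 2\mathcal{C}_T^\top \mathcal{C}_T$, we have
\begin{equation*}
\lambda_{\max}(H_S) = 2\,\sigma_{\max}(\mathcal{C}_T)^2 = 2\,\sigma_{\max}(\mathcal{C}_T^\top)^2 = 2\max_{\|u\|_2=1}\|\mathcal{C}_T^\top u\|_2^2 .
\end{equation*}
So it suffices to exhibit a unit vector $u\in\R^n$ for which $\|\mathcal{C}_T^\top u\|_2^2 \ge \mu^2|\lambda|^{2(T-1)}$. The natural choice is $u = v$, the unit left eigenvector of $A$.

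The key computation uses the block structure $\mathcal{C}_T^\top v = \big((A^{T-1}B)^\top v;\ \dots;\ B^\top v\big)$. The block associated with action $a_t$ is $B^\top (A^\top)^{T-1-t} v = \lambda^{T-1-t} B^\top v$, because $A^\top v = \lambda v$ and $\lambda$ is real. Hence
\begin{equation*}
\|\mathcal{C}_T^\top v\|_2^2 = \sum_{t=0}^{T-1} |\lambda|^{2(T-1-t)} \|B^\top v\|_2^2 .
\end{equation*}
The $t=0$ term alone is $|\lambda|^{2(T-1)}\|B^\top v\|_2^2$, and all other terms are nonnegative, so we may drop them. By Cauchy--Schwarz and $\|w\|_2=1$, we get $\|B^\top v\|_2 \ge |\langle B^\top v, w\rangle| = |\langle v, Bw\rangle| = \mu$. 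Combining these gives $\lambda_{\max}(H_S) \ge 2\mu^2|\lambda|^{2(T-1)}$.

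Alternatively, one could take the test vector $\mathbf{a} = (w;0;\dots;0)$ directly in action space. Then $\mathbf{a}^\top H_S \mathbf{a} = 2\|A^{T-1}Bw\|_2^2 \ge 2\langle v, A^{T-1}Bw\rangle^2 = 2|\lambda|^{2(T-1)}\mu^2$, again using $v^\top A^{T-1} = \lambda^{T-1} v^\top$. This version avoids the transpose/SVD step entirely, and I would likely present it as the main argument.

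The only delicate points are the left-eigenvector manipulation $v^\top A^{k} = \lambda^k v^\top$ and making sure $\lambda$ is real, so that the inner products are real and $|\lambda^{k}|^2 = |\lambda|^{2k}$. Both are given by the hypotheses, so I expect no real obstacle. A remark could note that the full geometric sum gives the sharper bound $2\mu^2 \frac{|\lambda|^{2T}-1}{|\lambda|^2-1}$.
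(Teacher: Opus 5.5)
Your proposal is correct. The argument you plan to present as the main one (the unit test vector $\mathbf{a} = (w;\mathbf{0};\dots;\mathbf{0})$, with $\|A^{T-1}Bw\|_2$ lower-bounded by its projection onto the left eigenvector $v$) is exactly the paper's proof; your dual version via $\mathcal{C}_T^\top v$ is also valid and additionally gives the sharper bound $2\mu^2\frac{|\lambda|^{2T}-1}{|\lambda|^2-1}$.
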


\begin{proof}
The Lipschitz constant is the maximum eigenvalue of the Hessian $H_S = 2\mathcal{C}_T^\top \mathcal{C}_T$, which equals $2\|\mathcal{C}_T\|_2^2$.
By definition, the spectral norm is the maximum gain over all possible inputs:
\begin{equation}
\|\mathcal{C}_T\|_2 = \sup_{\mathbf{a} \neq 0} \frac{\|\mathcal{C}_T \mathbf{a}\|_2}{\|\mathbf{a}\|_2}.
\end{equation}
From the existence of a controllable non-contractive mode, we can construct a unit-norm $\mathbf{a}_{test} = (w; \mb{0}; \dots; \mb{0})$ that evaluates to the form:
\begin{equation}
\mathcal{C}_T \mathbf{a}_{test} = A^{T-1}Bw,
\end{equation}
and such that the following holds when projecting to the corresponding unstable eigenvector $v$:
\begin{align}
\|\mathcal{C}_T \mathbf{a}_{test}\|_2 &\geq |\langle v, A^{T-1}Bw \rangle|, \\
&= |\langle (A^\top)^{T-1}v, Bw \rangle|, \\
&= |\langle \lambda^{T-1}v, Bw \rangle|, \\
&= |\lambda|^{T-1} |\langle v, Bw \rangle|, \\
&= \mu |\lambda|^{T-1}.
\end{align}
Squaring this result gives the desired bound.
\end{proof}

\begin{theorem}[Lifted: Stable Smoothness]
The Lipschitz constant of the Lifted gradient is bounded by a constant independent of the horizon $T$. Specifically, one valid bound is:
\begin{equation}
L_L = \lambda_{\max}(H_L) \le 6\bigl(1 + \|A\|_2^2 + \|B\|_2^2\bigr).
\end{equation}
\end{theorem}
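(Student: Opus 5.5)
Since $H_L = 2M^\top M$, we have $\lambda_{\max}(H_L) = 2\|M\|_2^2$, so it suffices to show $\|M\|_2^2 \le 3(1+\|A\|_2^2+\|B\|_2^2)$. The plan is to bound $\|M\mathbf{z}\|_2^2$ directly for an arbitrary $\mathbf{z} = (s_1;\dots;s_{T-1};a_0;\dots;a_{T-1})$, exploiting the block-banded structure of $M$: each row-block $t$ involves at most three free blocks, namely $s_t$ (for $t\ge 1$), $a_t$, and $s_{t+1}$ (for $t+1\le T-1$). The boundary values $s_{init}$ and $g$ live in $b$, so in $M\mathbf{z}$ they are simply absent. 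The signs in the first and last rows are immaterial for the norm.

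Concretely, we write the $t$-th block of $M\mathbf{z}$ as $r_t = A s_t + B a_t - s_{t+1}$, with the convention $s_0 = 0$ and $s_T = 0$ in this homogeneous expression. Applying $\|x+y+z\|^2 \le 3(\|x\|^2+\|y\|^2+\|z\|^2)$ (Cauchy--Schwarz) gives
\begin{equation*}
\|r_t\|_2^2 \le 3\bigl(\|A\|_2^2\|s_t\|_2^2 + \|B\|_2^2\|a_t\|_2^2 + \|s_{t+1}\|_2^2\bigr).
\end{equation*}
We then sum over $t = 0,\dots,T-1$. Each free state $s_j$ ($1\le j\le T-1$) appears exactly twice: once as the ``current'' state in row $j$, with weight $\|A\|_2^2$, and once as the ``next'' state in row $j-1$, with weight $1$. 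Each action $a_j$ appears exactly once, with weight $\|B\|_2^2$. Hence
\begin{equation*}
\|M\mathbf{z}\|_2^2 \le 3\Bigl(\sum_j (1+\|A\|_2^2)\|s_j\|_2^2 + \sum_j \|B\|_2^2\|a_j\|_2^2\Bigr) \le 3\bigl(1+\|A\|_2^2+\|B\|_2^2\bigr)\|\mathbf{z}\|_2^2.
\end{equation*}
Multiplying by $2$ yields $L_L \le 6(1+\|A\|_2^2+\|B\|_2^2)$. The bound is independent of $T$ because every variable appears in at most two consecutive residual blocks.

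The main point requiring care is this bookkeeping: checking that the boundary rows $t=0$ and $t=T-1$ contain only two free blocks, which only helps the bound, and that no variable is counted more than twice. A sharper constant could be obtained via a Gershgorin bound on the block-tridiagonal matrix $M^\top M$, but the Cauchy--Schwarz route already gives the stated bound cleanly. Establishing this is the key difference from the shooting case, where the powers $A^{T-1-t}$ accumulate.
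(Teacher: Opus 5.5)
Your proposal is correct and follows the paper's proof essentially line for line: you reduce to bounding $\|M\|_2^2$ via $H_L = 2M^\top M$, bound each residual block with $\|x+y+z\|_2^2 \le 3(\|x\|_2^2+\|y\|_2^2+\|z\|_2^2)$, and use the banded structure so that each free state is counted at most twice. Your explicit handling of the boundary rows is the same argument with slightly cleaner bookkeeping: you set $s_0 = s_T = 0$ in the homogeneous part and note that the sign flips do not affect the norm.
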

\begin{proof}
The Hessian is $H_L = 2M^\top M$, so $\lambda_{\max}(H_L) = 2\|M\|_2^2$.
It therefore suffices to upper bound $\|M\|_2$ by a constant that does not depend on $T$.

The matrix $M$ is block-sparse: each block row corresponding to timestep $t$ contains at most three non-zero blocks, namely an identity block (selecting $s_{t+1}$), a dynamics block (multiplying $s_t$), and an input block (multiplying $a_t$). Equivalently, the corresponding residual has the form
\begin{equation}
r_t \;=\; As_t + Ba_t - s_{t+1},
\end{equation}
with $s_0$ and $s_T$ treated as fixed boundary values (so $r_t$ is affine in the free variables).

Fix any optimization vector $\mathbf{z} = (s_1; \dots; s_{T-1}; a_0; \dots; a_{T-1})$ (stacking the free states and controls), and let $M\mathbf{z}$ denote the stacked residuals $(r_0,\dots,r_{T-1})$ with constants removed. Using the inequality $\|x+y+z\|_2^2 \le 3(\|x\|_2^2+\|y\|_2^2+\|z\|_2^2)$ and the operator norm bounds $\|As\|_2 \le \|A\|_2\|s\|_2$, $\|Ba\|_2 \le \|B\|_2\|a\|_2$, we obtain for each $t$:
\begin{align}
\|r_t\|_2^2
&= \|-s_{t+1} + As_t + Ba_t\,\|_2^2 \\
&\le 3\Bigl(\|s_{t+1}\|_2^2 + \|A\|_2^2\|s_t\|_2^2 + \|B\|_2^2\|a_t\|_2^2\Bigr).
\end{align}
Summing over $t=0,\dots,T-1$ gives
\begin{equation}
\|M\mathbf{z}\|_2^2 \;=\; \sum_{t=0}^{T-1}\|r_t\|_2^2
\;\le\;
3\sum_{t=0}^{T-1}\Bigl(\|s_{t+1}\|_2^2 + \|A\|_2^2\|s_t\|_2^2 + \|B\|_2^2\|a_t\|_2^2\Bigr).
\end{equation}
Crucially, due to the banded structure, each free intermediate state $s_1,\dots,s_{T-1}$ appears in at most two terms in the sum: once as $s_{t+1}$ and once as $s_t$. Hence the state contributions can be bounded without any accumulation in $T$, yielding the following:

\begin{align}
\|M\mathbf{z}\|_2^2
&\le
3\Bigl((1+\|A\|_2^2)\sum_{t=1}^{T-1}\|s_t\|_2^2 \;+\; \|B\|_2^2\sum_{t=0}^{T-1}\|a_t\|_2^2\Bigr),\\
&\le
3\bigl(1+\|A\|_2^2+\|B\|_2^2\bigr)\,\|\mathbf{z}\|_2^2.
\end{align}

Therefore, $\|M\|_2^2 = \sup_{\mathbf{z}\neq 0}\|M\mathbf{z}\|_2^2/\|\mathbf{z}\|_2^2 \le 3(1+\|A\|_2^2+\|B\|_2^2)$, and thus
\begin{equation}
L_L \;=\; 2\|M\|_2^2 \;\le\; 6\bigl(1+\|A\|_2^2+\|B\|_2^2\bigr),
\end{equation}
which is independent of $T$.
\end{proof}

\paragraph{Interpretation.}
While we needed a slightly more restrictive assumption for the lower bound on the Shooting method's Hessian, such a condition is not unreasonable to find, as many realistic systems will not be universally stable within the controllability subspace. In these settings, the shooting method forces the optimization to traverse a loss landscape with curvature that varies exponentially (requiring exponentially small steps to remain stable), while the lifted states method ``preconditions'' the problem by creating variables for intermediate states, decoupling the long-term dependencies into local constraints. This results in a loss landscape with uniform smoothness $O(1)$ with respect to the planning horizon length $T$.

\subsection{Gaussian noise regularization}\label{sec:proof_gaussianreg}
The regularity from noisy gradient descent (or Langevin-based optimization) primarily stems from the smoothing of the Gaussian convolution:
\begin{theorem}[Gaussian smoothing contracts gradients and yields scale control]\label{thm:gaussian_regularity}
Let $d\ge 1$, $\sigma>0$, and let
\begin{equation}
\phi_\sigma(\boldsymbol{\epsilon})=(2\pi\sigma^2)^{-d/2}\exp\!\big(-\|\boldsymbol{\epsilon}\|^2/(2\sigma^2)\big)
\end{equation}
be the density of $\mathcal N(0,\sigma^2 \mathbf I_d)$. For $L \in \mc C^1(\R^d)$ define
\begin{equation}
L_\sigma(\mathbf s)=(\phi_\sigma*L)(\mathbf s)=\int_{\mathbb R^d}\phi_\sigma(\boldsymbol{\epsilon})\,L(\mathbf s-\boldsymbol{\epsilon})\,d\boldsymbol{\epsilon}.
\end{equation}
The following statements hold of the resulting convolution:
\begin{enumerate}
    \item (Regularity never decreases.) For any $1\le p\le\infty$, if the distributional gradient $\nabla_{s}L\in L^p(\mathbb R^d)$, then
\begin{equation}
\|\nabla_{s} L_\sigma\|_{L^p}\ \le\ \|\nabla_{s} L\|_{L^p}.
\end{equation}
In particular, if $L$ is Lipschitz, then $\mathrm{Lip}(L_\sigma) = \|\nabla_{s} L_\sigma\|_{L^\infty} \le \|\nabla_{s} L\|_{L^\infty} = \mathrm{Lip}(L)$.

\item (Explicit regularity control by variance.) For any $1\le p\le\infty$, if $L\in L^p(\mathbb R^d)$, then
\begin{equation}
\|\nabla_{a} L_\sigma\|_{L^p}\ \le\ \frac{c_d}{\sigma}\,\|L\|_{L^p},
\qquad
c_d:=\mathbb E\|Z\|=\sqrt{2}\,\frac{\Gamma\big(\frac{d+1}{2}\big)}{\Gamma\big(\frac{d}{2}\big)},\ \ Z\sim\mathcal N(0,\mathbf I_d).
\end{equation}
Applying $p=\infty$, we get $\mathrm{Lip}(L_\sigma)\le \frac{c_d}{\sigma}\,\|L\|_{L^\infty}$.
\end{enumerate}
\end{theorem}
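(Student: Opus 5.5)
For part 1 the plan is to commute differentiation with convolution and then apply Young's inequality. Since $\phi_\sigma$ is a Schwartz function, for $L\in\mathcal C^1$ with $\nabla_s L\in L^p$ we have, in the distributional sense, $\nabla_s L_\sigma = \phi_\sigma * \nabla_s L$. This follows by testing against a compactly supported $\psi$ and moving the derivative across the convolution via Fubini, since $\int (\phi_\sigma*L)\,\partial_i\psi = \int L\,(\check\phi_\sigma*\partial_i\psi)$.

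Young's convolution inequality with the $L^1$ kernel $\phi_\sigma$ then gives
\[
\|\phi_\sigma*\nabla_s L\|_{L^p}\le\|\phi_\sigma\|_{L^1}\,\|\nabla_s L\|_{L^p}=\|\nabla_s L\|_{L^p}.
\]
For vector-valued gradients, I would apply Minkowski's integral inequality to $\|\int\phi_\sigma(\epsilon)\nabla L(\cdot-\epsilon)\,d\epsilon\|$ pointwise in Euclidean norm, and then take the $L^p$ norm. The Lipschitz statement is the case $p=\infty$, together with the identity $\mathrm{Lip}(f)=\|\nabla f\|_{L^\infty}$ for $f\in W^{1,\infty}$ on convex $\R^d$; this identity holds because $L_\sigma$ is smooth and the mean value theorem applies.

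For part 2, which concerns the same gradient in the smoothing variable (the statement writes $\nabla_a$), I would instead put the derivative on the kernel: $\nabla L_\sigma = (\nabla\phi_\sigma)*L$. This is justified for $L\in L^p$ because $\nabla\phi_\sigma\in L^1$, with the same distributional argument as above. Young's inequality then gives
\[
\|\nabla L_\sigma\|_{L^p}\le\|\nabla\phi_\sigma\|_{L^1}\,\|L\|_{L^p}.
\]
The remaining step is the explicit computation of the constant. We have $\nabla\phi_\sigma(\epsilon) = -\frac{\epsilon}{\sigma^2}\phi_\sigma(\epsilon)$, so
\[
\|\nabla\phi_\sigma\|_{L^1}=\sigma^{-2}\,\mathbb E\|\sigma Z\|=\frac{\mathbb E\|Z\|}{\sigma}=\frac{c_d}{\sigma}.
\]
The chi-distribution mean $\mathbb E\|Z\|=\sqrt2\,\Gamma(\tfrac{d+1}{2})/\Gamma(\tfrac d2)$ follows from polar coordinates. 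Again, a vector-valued Minkowski bound handles the Euclidean norm of the gradient.

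The main obstacle I expect is rigor about the interchange of derivative and integral under only $L^p$ hypotheses. In particular, for $p=\infty$ the function $L$ need not decay, so the justification should rely on dominated convergence with the integrable majorant $\sup_{|h|\le1}|\nabla\phi_\sigma(\epsilon-h)|\,\|L\|_\infty$ (or with $\|\nabla L\|_\infty$ in part 1), rather than on density arguments. I would write out this dominated-convergence step explicitly once and reuse it for both parts.
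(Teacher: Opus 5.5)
Your proposal is correct and follows the paper's proof essentially step for step. The shared argument is the commutation identity $\nabla L_\sigma=\phi_\sigma*\nabla L=(\nabla\phi_\sigma)*L$, then Young's inequality with $\|\phi_\sigma\|_{L^1}=1$ for part 1 and with $\|\nabla\phi_\sigma\|_{L^1}=\sigma^{-2}\mathbb E\|\sigma Z\|=c_d/\sigma$ for part 2; your additions (dominated-convergence/distributional justification of the interchange, and the vector-valued Minkowski step) only fill in details the paper asserts without proof.
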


\begin{proof}
An important property of convolution is that gradients commute to either argument:
\begin{align}
\nabla_{s} L_\sigma
&=\nabla_{s}(\phi_\sigma*L), \\
&=(\nabla \phi_\sigma)*L, \\
&=\phi_\sigma*(\nabla_{s}L),
\end{align}
where $\nabla \phi_\sigma(\boldsymbol{\epsilon})=-(\boldsymbol{\epsilon}/\sigma^2)\,\phi_\sigma(\boldsymbol{\epsilon})$.

For part 1, apply Young's convolution inequality with the fact that $\|\phi_\sigma\|_{L^1}=1$. For any $g\in L^p$, 
\begin{equation}
\|g*\phi_\sigma\|_{L^p}\le \|g\|_{L^p}.
\end{equation}
Taking $g=\nabla_{\!a}L$ and using the commutation identity above gives
\begin{equation}
\|\nabla_{s} L_\sigma\|_{L^p}=\|\phi_\sigma*(\nabla_{s}L)\|_{L^p}\le \|\nabla_{s}L\|_{L^p}.
\end{equation}
When $p=\infty$, $\|\nabla_{s}L\|_{L^\infty}$ is the Lipschitz constant of $L$, so $\mathrm{Lip}(L_\sigma)\le \mathrm{Lip}(L)$.

For part 2, again use the commutation identity together with Young's inequality in the form $\|h*k\|_{L^p}\le \|h\|_{L^1}\|k\|_{L^p}$. We obtain
\begin{equation}
\|\nabla_{\!a} L_\sigma\|_{L^p}
=\|(\nabla\phi_\sigma)*L\|_{L^p}
\le \|\nabla\phi_\sigma\|_{L^1}\,\|L\|_{L^p}.
\end{equation}
It remains to compute $\|\nabla\phi_\sigma\|_{L^1}$. Since $\|\nabla\phi_\sigma(\boldsymbol{\epsilon}\|_2=\|\boldsymbol{\epsilon}\|_2 \phi_\sigma(\boldsymbol{\epsilon})/\sigma^2$ and $X\sim\mathcal N(0,\sigma^2\mathbf I_d)$ has density $\phi_\sigma$, we get
\begin{align}
\|\nabla\phi_\sigma\|_{L^1}
&=\int_{\mathbb R^d}\frac{\|\boldsymbol{\epsilon}\|_2}{\sigma^2}\,\phi_\sigma(\boldsymbol{\epsilon})\,d\boldsymbol{\epsilon} \\
&=\frac{1}{\sigma^2}\,\mathbb E\|X\|_2 \\
&=\frac{1}{\sigma}\,\mathbb E\|Z\|_2 \\
&=\frac{c_d}{\sigma},
\end{align}
where $X=\sigma Z$ with $Z\sim\mathcal N(0,\mathbf I_d)$. Substituting this into the previous display yields the claimed bound. The $p=\infty$ statement is the corresponding Lipschitz estimate.
\end{proof}

We then get regularity in expectation by noting that, after adding noise to a gradient step, this noise is then fed as input into the next step.
\begin{corollary}[Noise induces smoothed gradients in expectation]
Let $\boldsymbol{\xi} \sim \mathcal N(0,\sigma^2 \mathbf I_d)$ and define the Gaussian-smoothed loss
\begin{equation}
L_\sigma(\mathbf s) := \mathbb E_{\boldsymbol{\xi}}\big[L(\mathbf s+\boldsymbol{\xi})\big].
\end{equation}
Assume $L \in \mc C^1(\R^d)$ and that $\mathbb E_{\boldsymbol{\xi}}\|\nabla L(\mathbf s+\boldsymbol{\xi})\|<\infty$ (so differentiation may be interchanged with expectation). Then
\begin{equation}
\mathbb E_{\boldsymbol{\xi}}\big[\nabla L(\mathbf s+\boldsymbol{\xi})\big]
= \nabla_{\!\mathbf s} L_\sigma(\mathbf s).
\end{equation}
Moreover, if $L\in L^\infty(\mathbb R^d)$, then by Theorem~\ref{thm:gaussian_regularity} (Part 2),
\begin{equation}
\|\nabla_{\!\mathbf s} L_\sigma\|_{L^\infty}
\le \frac{c_d}{\sigma}\,\|L\|_{L^\infty},
\qquad\text{and hence}\qquad
\Big\|\mathbb E_{\boldsymbol{\xi}}\big[\nabla L(\mathbf s+\boldsymbol{\xi})\big]\Big\|
\le \frac{c_d}{\sigma}\,\|L\|_{L^\infty}.
\end{equation}
\end{corollary}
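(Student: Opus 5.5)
The plan has two parts: first establish the gradient identity, then read off the bound from Theorem~\ref{thm:gaussian_regularity}.

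\emph{Step 1: the identity.} Write
\begin{equation}
L_\sigma(\mathbf s)=\int_{\R^d}\phi_\sigma(\boldsymbol{\epsilon})\,L(\mathbf s+\boldsymbol{\epsilon})\,d\boldsymbol{\epsilon}.
\end{equation}
Since $\phi_\sigma$ is even, this equals the convolution $(\phi_\sigma*L)(\mathbf s)$, so $L_\sigma$ coincides with the object in Theorem~\ref{thm:gaussian_regularity}. Next we differentiate under the integral sign, i.e.\ exchange $\nabla_{\mathbf s}$ and $\mathbb E_{\boldsymbol\xi}$. The hypothesis $\mathbb E_{\boldsymbol{\xi}}\|\nabla L(\mathbf s+\boldsymbol{\xi})\|<\infty$ is exactly what licenses this. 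Concretely, I would write the difference quotient in direction $e_i$ through the fundamental theorem of calculus,
\begin{equation}
\frac{L(\mathbf s+he_i+\boldsymbol\xi)-L(\mathbf s+\boldsymbol\xi)}{h}=\int_0^1 \partial_i L(\mathbf s+\tau he_i+\boldsymbol\xi)\,d\tau ,
\end{equation}
and then pass to the limit $h\to0$ by dominated convergence. This gives $\partial_i L_\sigma(\mathbf s)=\mathbb E[\partial_i L(\mathbf s+\boldsymbol\xi)]$ for each $i$, which is the identity.

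\emph{Main obstacle.} The hypothesis controls the integrability of $\nabla L(\cdot+\boldsymbol\xi)$ only at the single point $\mathbf s$, not uniformly on a neighborhood of it, which is what dominated convergence needs. I would get around this in one of two ways:
\begin{itemize}
\item read the hypothesis as holding locally uniformly in $\mathbf s$, which is the stated parenthetical intent;
\item or sidestep it entirely: shift the Gaussian, $L_\sigma(\mathbf s)=\int \phi_\sigma(\mathbf y-\mathbf s)L(\mathbf y)\,d\mathbf y$, and differentiate the smooth kernel. This gives $\nabla L_\sigma=(\nabla\phi_\sigma)*L$, and Gaussian tails justify the exchange. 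Integration by parts then converts this into $\phi_\sigma*\nabla L$, which uses the commutation identity already invoked in the proof of Theorem~\ref{thm:gaussian_regularity}.
\end{itemize}

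\emph{Step 2: the bound.} If $L\in L^\infty$, Part~2 of Theorem~\ref{thm:gaussian_regularity} with $p=\infty$ gives
\begin{equation}
\|\nabla L_\sigma\|_{L^\infty}\le \frac{c_d}{\sigma}\|L\|_{L^\infty}.
\end{equation}
This bound is direct from the kernel representation: $\|(\nabla\phi_\sigma)*L\|_\infty\le\|\nabla\phi_\sigma\|_{L^1}\|L\|_\infty = \frac{c_d}{\sigma}\|L\|_\infty$, so it does not depend on Step~1. Combining it pointwise with the identity from Step~1 yields the final inequality for $\|\mathbb E_{\boldsymbol\xi}[\nabla L(\mathbf s+\boldsymbol\xi)]\|$.
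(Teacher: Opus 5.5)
Your proposal is correct and is essentially the paper's argument. The paper gives no separate proof: it only interchanges $\nabla_{\mathbf s}$ with $\mathbb E_{\boldsymbol\xi}$ under the stated integrability assumption and then invokes Part~2 of Theorem~\ref{thm:gaussian_regularity} with $p=\infty$, which are your Steps~1 and~2. You are right that integrability at the single point $\mathbf s$ does not by itself justify dominated convergence, a point the paper glosses over. Your locally uniform reading fixes this in general, but your kernel-differentiation route needs a growth bound on $L$, not Gaussian tails alone; that bound is supplied by $L\in L^\infty$ in the second part.
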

This then provides motivation for the decaying noise schedule: as an annealing process from a smoother, less accurate gradient structure to a rigid but more accurate gradient structure. For more detailed theory on the regularity of noisy gradient descent, see e.g.~\cite{chaudhari2019entropy} and the references therein.

\subsection{Connection to Boltzmann sampling, state-only noising}\label{sec:boltzmann_connection}

The continuous-time Langevin dynamics corresponding to \eqref{eq:langevin_euler}, if we were to also noise the actions $a_t$ similarly, has a stationary distribution that concentrates on low-energy regions of $\mathcal{L}_{\mathrm{dyn}}(\mathbf{s},\mathbf{a})$; in particular, under mild regularity conditions it admits the Gibbs (Boltzmann) density
\begin{equation}
\label{eq:boltzmann_states}
    p(\mathbf{s}, \mathbf{a}) \ \propto\ \exp \big(-\beta\,\mathcal{L}_{\mathrm{dyn}}(\mathbf{s}, \mathbf{a})\big),
\end{equation}
for an inverse temperature $\beta>0$ determined by the relative scaling between the drift and diffusion terms\footnote{Equivalently, one can view $\sigma_{\text{state}}$ in \eqref{eq:langevin_euler} as setting an effective temperature: larger $\sigma_{\text{state}}$ yields broader exploration, while smaller $\sigma_{\text{state}}$ concentrates around local minima.}. However, we are only noising the states, so the converged distribution is not Boltzmann. The new converged distribution, written loosely, collapses on solutions in action to the local dynamics problems:
\begin{equation}\label{eq:boltzmann-state-only}
    p(\mb s, \mb a) \propto \delta(\mb a = \mb a^*(\mb s)) \cdot \exp \big(-\beta\,\mathcal{L}_{\mathrm{dyn}}(\mathbf{s}, \mathbf{a})\big),
\end{equation}
where $\mb a^*(\mb s) = \min_{\mb a} \mc L_\text{dyn}(\mb s, \mb a)$.

\subsection{State-gradient-free Dynamics}\label{sec:stopgrad}

We now analyze a variant of the optimization method where we ``cut'' the gradient flow through the state evolution in the dynamics. This is akin to removing the adjoint (backward) pass through time, transforming the global trajectory optimization into a sequence of locally regularized problems.

First, we provide proof that there are no loss functions where (i) the minimizers correspond to dynamically feasible trajectories, that also (ii) has no dependency on the state gradients through the world model $F_\theta(s, a)$. We formalize this in the following theorem:

\begin{theorem}[Nonexistence of exact dynamics-enforcing losses with Jacobian-free state gradients]
Let $\mathcal S\subseteq\mathbb R^n$ and $\mathcal A\subseteq\mathbb R^m$ each be open and connected sets, and fix $s_0\in\mathcal S$ and $g\in\mathcal S$. Consider horizon $T=2$ with decision variables $(s_1,a_0,a_1)\in\mathcal S\times\mathcal A\times\mathcal A$ and boundary $s_2=g$ fixed. Let $ L_F : \mc S \times \mc A \times \mc A \to \R$ be decomposable into the following form:
\begin{align}
L_F(s_1,a_0,a_1) &= \Phi(s_1,a_0,a_1,y_0,y_1),\\
y_0 &= F(s_0,a_0),\\
y_1 &= F(s_1,a_1),
\end{align}
where $\Phi:\mathcal S\times\mathcal A\times\mathcal A\times\mathcal S\times\mathcal S\to\mathbb R$ is $C^1$. Let
\begin{equation}
\mathcal M(F)=\{(s_1,a_0,a_1): s_1=F(s_0,a_0)\ \text{and}\ g=F(s_1,a_1)\}
\end{equation}
denote the set of trajectories that satisfy the dynamics exactly at both steps (with the fixed boundary conditions). There does not exist such a $\Phi$ for which the following two properties hold simultaneously for every $C^1$ model $F$:
\begin{enumerate}
    \item Minimizers correspond to feasible dynamics: $\argmin_{(s_1,a_0,a_1)} L_F(s_1,a_0,a_1) = \mathcal M(F),$
    \item Independence of loss to the dynamics' state gradient: for all $G: \mc S \times \mc A \to \mc S, G\in C^1$, $ \Big(F(s_0,a_0)=G(s_0,a_0), F(s_1,a_1)=G(s_1,a_1)\Big)
\Rightarrow
\nabla_{s_1}L_F(s_1,a_0,a_1)=\nabla_{s_1}L_G(s_1,a_0,a_1)$.
\end{enumerate}
\end{theorem}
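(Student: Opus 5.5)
Argue by contradiction: assume some $C^1$ function $\Phi$ satisfies both properties for every $C^1$ model $F$. Then construct two models $F$ and $G$ that agree at the two evaluation points $(s_0,a_0)$ and $(s_1,a_1)$ but have different state Jacobians at $(s_1,a_1)$. Property 2 says $\nabla_{s_1}L_F=\nabla_{s_1}L_G$ there. Property 1, through a first-order condition at a feasible point, ties $\nabla_{s_1}L_F$ to the Jacobian $J_sF(s_1,a_1)$. These two facts should be incompatible.

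First I would unpack the chain rule. Since $y_0=F(s_0,a_0)$ does not depend on $s_1$,
\begin{equation*}
\nabla_{s_1}L_F = \partial_{s_1}\Phi + J_sF(s_1,a_1)^\top\,\partial_{y_1}\Phi,
\end{equation*}
with every partial of $\Phi$ evaluated at $(s_1,a_0,a_1,y_0,y_1)$. If $F$ and $G$ agree at both evaluation points, then $y_0,y_1$ agree, so all partials of $\Phi$ coincide. Property 2 then forces
\begin{equation*}
\bigl(J_sF(s_1,a_1)-J_sG(s_1,a_1)\bigr)^\top\partial_{y_1}\Phi=0
\end{equation*}
for all such $F,G$. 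Because $J_sG$ at a single point can be chosen freely (add a $C^1$ bump $G=F+\psi$ with $\psi(s_1,a_1)=0$, $\psi(s_0,a_0)=0$ and arbitrary $J_s\psi(s_1,a_1)$, which is possible when $(s_1,a_1)\neq(s_0,a_0)$), this gives $\partial_{y_1}\Phi(s_1,a_0,a_1,y_0,y_1)=0$. That holds at every tuple realizable by some $F$, and essentially every tuple with $(s_1,a_1)\neq(s_0,a_0)$ is realizable. By continuity and connectedness, $\Phi$ does not depend on $y_1$.

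Next I would show that a $y_1$-independent $\Phi$ cannot satisfy property 1 for all $F$. Pick a $C^1$ model $F$ with a feasible point $(s_1^*,a_0^*,a_1^*)\in\mathcal M(F)$. Then choose $G$ agreeing with $F$ at $(s_0,a_0^*)$ but with $G(s_1^*,a_1^*)\neq g$ (perturb away from that point), and arrange that $G$ still has some feasible trajectory elsewhere, so that $\mathcal M(G)$ is nonempty. The loss values then satisfy $L_G(s_1^*,a_0^*,a_1^*)=L_F(s_1^*,a_0^*,a_1^*)$, because $\Phi$ sees only $y_0$, which is unchanged. Taking minimal values: $\min L_F$ is attained on $\mathcal M(F)$, and I would compare it with $\min L_G$ at a point of $\mathcal M(G)$ that also lies in $\mathcal M(F)$. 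To make this clean, I would build $F$ with two feasible trajectories sharing the same $a_0^*$, hence the same $s_1^*$, but different $a_1$'s: $(s_1^*,a_0^*,a_1^*)$ and $(s_1^*,a_0^*,a_1')$. Then $G$ breaks only the first one. Both points give $L_F$ equal to the minimum value, and $L_G$ takes the same values at them. The second point stays feasible for $G$ and has the same $(y_0,s_1)$, so $\min L_G\le$ that common value. The first point therefore attains the minimum of $L_G$ but lies outside $\mathcal M(G)$, contradicting property 1.

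The hard part is the first paragraph's step. I need to justify that $\partial_{y_1}\Phi=0$ on enough of the domain, and that the $C^1$ perturbations exist with prescribed value and Jacobian at one point while fixed at another. One caveat: this only concerns tuples with $s_1\neq s_0$ or $a_1\neq a_0$, but that is an open dense set, so continuity suffices. I only need it along the points used in the final comparison, which I can choose generically.
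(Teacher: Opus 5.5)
Your proposal is correct and follows the paper's proof. Both proceed in two steps: the chain-rule-plus-bump-function argument forces $\partial_{y_1}\Phi\equiv 0$, and then a pair of models that the $y_1$-blind loss cannot distinguish contradicts property 1. Your density/continuity treatment of the excluded tuples with $(s_1,a_1)=(s_0,a_0)$ is, if anything, more careful than the paper's bare appeal to connectedness. The final step differs only cosmetically. The paper takes $F(s_0,\cdot)=G(s_0,\cdot)$ on all of $\mathcal A$, so that $L_F\equiv L_G$ and hence $\mathcal M(F)=\mathcal M(G)$. You instead match $F$ and $G$ only at $(s_0,a_0^*)$ and $(s_1^*,a_1')$, and use the shared feasible trajectory $(s_1^*,a_0^*,a_1')$ to force $\min L_G=L_G(s_1^*,a_0^*,a_1^*)$. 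For that, choose $(s_1^*,a_1^*)\neq(s_0,a_0^*)$, e.g.\ $s_1^*\neq s_0$, so that changing $G$ at $(s_1^*,a_1^*)$ does not disturb its value at $(s_0,a_0^*)$.
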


\begin{proof}
We argue by contradiction.

Fix any point $(s_1,a_0,a_1)$ and write $y_0=F(s_0,a_0)$ and $y_1=F(s_1,a_1)$. By the chain rule,
\begin{equation}
\nabla_{s_1}L_F(s_1,a_0,a_1)
=
\frac{\partial \Phi}{\partial s_1}(s_1,a_0,a_1,y_0,y_1)
+
\big(\nabla_s F(s_1,a_1)\big)^\top
\frac{\partial \Phi}{\partial y_1}(s_1,a_0,a_1,y_0,y_1).
\end{equation}
We claim that $\partial\Phi/\partial y_1$ must vanish identically. To see this, fix arbitrary arguments $(s_1,a_0,a_1,y_0,y_1), (s_0, a_0) \ne (s_1, a_1)$ in the domain and construct two $C^1$ models $F$ and $G$ such that
$F(s_0,a_0) = G(s_0,a_0)=y_0$ and 
$F(s_1,a_1) = G(s_1,a_1)=y_1$,
but whose state Jacobians at $(s_1,a_1)$ are prescribed arbitrarily and differ:
\begin{equation}
\nabla_sF(s_1,a_1)=J_F,\qquad \nabla_sG(s_1,a_1)=J_G.
\end{equation}
To construct, choose a small ball $B$ around $(s_1,a_1)$ contained in $\mathcal S\times\mathcal A$, construct a smooth bump function $\psi$ that equals $1$ on a smaller concentric ball and $0$ outside $B$ (possible since $\mc S, \mc A$ open), and define
\begin{equation}
F(s,a)
=
H(s,a)+\psi(s,a)\Big((y_1-H(s_1,a_1)) + (J_F-\nabla_s H(s_1,a_1))(s-s_1)\Big),
\end{equation}
for an arbitrary $C^1$ base map $H$. Then $F(s_1,a_1)=y_1$ and $\nabla_sF(s_1,a_1)=J_F$. Defining $G$ analogously with $J_G$ gives the desired pair; values at $(s_0,a_0)$ can be kept fixed by choosing disjoint supports or applying the same local surgery at $(s_0,a_0)$.

By Jacobian-invariance, $\nabla_{s_1}L_F=\nabla_{s_1}L_G$ at $(s_1,a_0,a_1)$. Subtracting the two chain-rule expressions cancels $\partial\Phi/\partial s_1$ and yields
\begin{equation}
(J_F-J_G)^\top \frac{\partial \Phi}{\partial y_1}(s_1,a_0,a_1,y_0,y_1)=0.
\end{equation}
Since $J_F-J_G$ can be any matrix in $\mathbb R^{n\times n}$, it follows that
\begin{equation}
\frac{\partial \Phi}{\partial y_1}(s_1,a_0,a_1,y_0,y_1)=0.
\end{equation}
Because the arguments were arbitrary, and since $\mc S$ is connected, we conclude $\partial\Phi/\partial y_1\equiv 0$, meaning the loss is independent of $y_1$. Therefore there exists a $C^1$ function $\widetilde\Phi$ such that for every $F$,
\begin{equation}
L_F(s_1,a_0,a_1)=\widetilde\Phi(s_1,a_0,a_1,F(s_0,a_0)).
\end{equation}
In particular, if two models $F$ and $G$ satisfy $F(s_0,a)=G(s_0,a)$ for all $a\in\mathcal A$, then $L_F\equiv L_G$ as functions of $(s_1,a_0,a_1)$, and hence
\begin{equation}
\arg\min L_F=\arg\min L_G.
\end{equation}

We now construct such a pair $F,G$ but with different feasible sets, contradicting the exact-minimizers assumption. Pick two distinct actions $u,v\in\mathcal A$, two distinct states $s_A,s_B\in\mathcal S$, and an action $a^\star\in\mathcal A$. Using bump-function surgery as above, construct $C^1$ models $F$ and $G$ such that
\begin{equation}
F(s_0,a)=G(s_0,a)\quad \text{ for all  } a\in\mathcal A,
\end{equation}
and
\begin{align}
F(s_0,u)&=G(s_0,u)=s_A,\\
F(s_0,v)&=G(s_0,v)=s_B,
\end{align}
but with swapped second-step goal reachability:
\begin{align}
F(s_A,a^\star) &= g, & F(s_B,a^\star) &\neq g,\\
G(s_A,a^\star) &\neq g, & G(s_B,a^\star) &= g.
\end{align}
Then $(s_A,u,a^\star)\in\mathcal M(F)$ but $(s_A,u,a^\star)\notin\mathcal M(G)$, and $(s_B,v,a^\star)\in\mathcal M(G)$ but $(s_B,v,a^\star)\notin\mathcal M(F)$, so $\mathcal M(F)\neq \mathcal M(G)$. On the other hand, since $F(s_0,\cdot)=G(s_0,\cdot)$ we have $\arg\min L_F=\arg\min L_G$. By assumption (i),
\begin{equation}
\arg\min L_F=\mathcal M(F)\quad \text{and}\quad \arg\min L_G=\mathcal M(G),
\end{equation}
implying $\mathcal M(F)=\mathcal M(G)$, a contradiction. Therefore, no such $\Phi$ can exist.
\end{proof}

We introduce the stop-gradient operator $\text{sg}(\cdot)$, where $\text{sg}(x) = x$ during the forward evaluation, but $\nabla \text{sg}(x) = 0$ during the backward pass. The modified objective incorporating the stop-gradient mechanism is:
\begin{align}
\mathcal{L}_{sg}(\mathbf{s}, \mathbf{a}) = &\sum_{t=0}^{T-1}  \|s_{t+1} - A \text{sg}(s_t) - B a_t\|_2^2\nonumber \\
&+ \sum_{t=0}^{T-1} \beta_t \|g - A \text{sg}(s_t) - B a_t\|_2^2
\end{align}
where $\beta_t \ge 0$ are the goal loss coefficients. Note that the target $g$ is effectively applied as a penalty on the state at each step to guide the local optimization.

\begin{theorem}[Linear convergence to a unique fixed point]\label{thm:stopgrad_linear_contraction}
Consider the gradient descent iteration
\[
(\mathbf s^{k+1},\mathbf a^{k+1})
=
(\mathbf s^{k},\mathbf a^{k})-\eta \nabla \mathcal L_{sg}(\mathbf s^{k},\mathbf a^{k})
\]
where $s_0$ is fixed and gradients are computed with the stop-gradient convention (i.e., treating $\text{sg}(s_t)$ as constant during differentiation).
Assume the linear dynamics setting and that
\[
\beta_t \in [\beta_{\min},\beta_{\max}] \quad \text{for all } t,\qquad \beta_{\min}>0,
\]
and that $B$ has full column rank (equivalently, $\sigma_{\min}(B)>0$).
Then there exists a stepsize $\eta\in (0,\bar \eta)$, where $\bar\eta$ depends only on $\beta_{\min},\beta_{\max},\|B\|_2,\sigma_{\min}(B)$ (and not on $T$), such that the induced update operator $\mathcal T$ on $\mathbf z=(\mathbf s,\mathbf a)$ has a unique fixed point $\mathbf z^\star$ and the iterates converge linearly:
\[
\|\mathbf z^{k}-\mathbf z^\star\| \le Cq^k\|\mathbf z^{0}-\mathbf z^\star\|,\qquad \text{for some } q\in(0,1) \text{ and } C > 0.
\]
\end{theorem}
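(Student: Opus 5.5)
The plan is to exploit the fact that, in the linear setting with the stop-gradient convention, the update operator $\mathcal T$ is \emph{affine}: $\mathcal T(\mathbf z)=M\mathbf z+c$ for a constant matrix $M=I-\eta K$ and vector $c$. Everything then reduces to showing that $\rho(M)<1$ with a bound independent of $T$. That gives both uniqueness of the fixed point (since $I-M$ is invertible) and linear convergence.

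First I will compute the stop-gradient partial derivatives, treating $\mathrm{sg}(s_t)$ as a constant. For $t=0,\dots,T-1$:
\begin{align*}
\nabla_{s_{t+1}}\mathcal L_{sg} &= 2\bigl(s_{t+1}-As_t-Ba_t\bigr),\\
\nabla_{a_t}\mathcal L_{sg} &= -2B^\top\bigl(s_{t+1}-As_t-Ba_t\bigr)-2\beta_t B^\top\bigl(g-As_t-Ba_t\bigr).
\end{align*}
The key observation is the block structure. Group the variables into time blocks $w_t=(a_t,s_{t+1})$. Each block's gradient depends only on $w_t$ itself and on $s_t\in w_{t-1}$. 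The linear part $K$ is therefore block lower-bidiagonal. Its diagonal blocks are
\[
D_t=2\begin{pmatrix}(1+\beta_t)B^\top B & -B^\top\\ -B & I\end{pmatrix},
\]
and its subdiagonal blocks involve only $A$ and $\beta_t$.

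Next I will show each $D_t$ is symmetric positive definite, with spectrum in $[\lambda_-,\lambda_+]$ depending only on $\beta_{\min},\beta_{\max},\|B\|_2,\sigma_{\min}(B)$. The quadratic form is
\[
(a,s)^\top D_t(a,s)=2\bigl(\beta_t\|Ba\|^2+\|s-Ba\|^2\bigr).
\]
This is at least $c\,(\|a\|^2+\|s\|^2)$ with $c>0$ depending on $\beta_{\min}\sigma_{\min}(B)^2$; one can split $\|s\|^2\le 2\|s-Ba\|^2+2\|B\|^2\|a\|^2$ and use $\|Ba\|\ge\sigma_{\min}(B)\|a\|$. The upper bound $\lambda_+$ follows from $\|B\|_2$ and $\beta_{\max}$. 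Taking $\bar\eta=2/\lambda_+$, for $\eta\in(0,\bar\eta)$ each diagonal block of $M$ satisfies $\|I-\eta D_t\|_2\le q_0:=\max(|1-\eta\lambda_-|,|1-\eta\lambda_+|)<1$. Since $M$ is block lower-triangular, its spectrum is the union of the spectra of these blocks, so $\rho(M)\le q_0<1$ uniformly in $T$.

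The main obstacle is passing from $\rho(M)<1$ to an explicit geometric bound, because $M$ is non-normal: it is not a gradient-descent matrix of a symmetric Hessian, owing to the stop-gradient. I will handle this with a weighted block norm. Let $S_\varepsilon=\mathrm{diag}(I,\varepsilon I,\varepsilon^2 I,\dots)$, rescaling block $t$ by $\varepsilon^{t}$. Then $S_\varepsilon^{-1}MS_\varepsilon$ has the same diagonal blocks, while its subdiagonal blocks are multiplied by $\varepsilon$. Those subdiagonal blocks have norm at most $\eta c_1$ with $c_1=c_1(\|A\|,\|B\|,\beta_{\max})$, so
\[
\|S_\varepsilon^{-1}MS_\varepsilon\|_2\le q_0+\varepsilon\eta c_1=:q<1
\]
for $\varepsilon$ small. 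Iterating $\mathbf z^{k}-\mathbf z^\star=M^k(\mathbf z^0-\mathbf z^\star)$ then gives $\|\mathbf z^k-\mathbf z^\star\|\le \kappa(S_\varepsilon)\,q^k\|\mathbf z^0-\mathbf z^\star\|$. Here $C=\kappa(S_\varepsilon)=\varepsilon^{-(T-1)}$. This constant depends on $T$, which the statement permits, while the rate $q$ and the step bound $\bar\eta$ do not.
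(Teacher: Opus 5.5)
Your proposal follows the paper's proof essentially step for step. The stop-gradient update is affine with a block lower-triangular Jacobian in the time-ordered blocks $(s_{t+1},a_t)$. Its diagonal blocks are $I-\eta H_t$ with $H_t\succ 0$ uniformly in $t$: you show this with the quadratic form $2(\beta_t\|Ba\|^2+\|s-Ba\|^2)$, the paper with a Schur complement. Hence $\rho\le q_0<1$ independently of $T$. The only difference is the last step. The paper cites the existence of an induced norm with $\|J\|_\dagger\le q$, while you build that norm by explicit block scaling. Your version has the advantage of making explicit both the $T$-dependence of $C$ and the uniqueness of $\mathbf z^\star$ (via invertibility of $I-M$).

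One slip needs correcting. Because $M$ is block \emph{lower} triangular, $(S_\varepsilon^{-1}MS_\varepsilon)_{t,t-1}=\varepsilon^{-t}M_{t,t-1}\varepsilon^{t-1}=\varepsilon^{-1}M_{t,t-1}$, so your conjugation amplifies the subdiagonal rather than shrinking it. Use $S_\varepsilon MS_\varepsilon^{-1}$ instead, or equivalently weight block $t$ by $\varepsilon^{-t}$. With that change, the bound $q_0+\varepsilon\eta c_1$ and the constant $\kappa(S_\varepsilon)=\varepsilon^{-(T-1)}$ go through unchanged.
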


\begin{proof}
Write the objective (re-indexing the goal term to match the action index) as
\begin{equation}
\mathcal{L}_{sg}(\mathbf{s},\mathbf{a})
=
\sum_{t=0}^{T-1}\|s_{t+1}-A\,\mathrm{sg}(s_t)-Ba_t\|_2^2
\;+\;
\sum_{t=0}^{T-1}\beta_t\|g-A\,\mathrm{sg}(s_t)-Ba_t\|_2^2 .
\end{equation}
Define the residuals
\begin{equation}
r_t := s_{t+1}-A s_t-Ba_t,
\qquad
e_t := g-A s_t-Ba_t .
\end{equation}
Under the stop-gradient convention, $\mathrm{sg}(s_t)$ is treated as constant during differentiation, so $s_t$ does not receive gradient contributions through the $A\,\mathrm{sg}(s_t)$ terms. It follows that the only state-gradient at time $t$ comes from the appearance of $s_t$ as the \emph{next} state in the previous residual, namely
\begin{equation}
\nabla_{s_t}\mathcal{L}_{sg} = 2r_{t-1},
\qquad t=1,\dots,T,
\end{equation}
with the understanding that $r_{-1}=0$ if $s_0$ is fixed. Likewise, the action-gradient at time $t$ is
\begin{equation}
\nabla_{a_t}\mathcal{L}_{sg}
=
-2B^\top r_t -2\beta_t B^\top e_t,
\qquad t=0,\dots,T-1.
\end{equation}
Therefore, gradient descent with stepsize $\eta>0$ yields the explicit update rules
\begin{equation}
s_t^{k+1} = s_t^k - 2\eta\big(s_t^k-As_{t-1}^k-Ba_{t-1}^k\big),
\qquad t=1,\dots,T,
\end{equation}
and
\begin{equation}
a_t^{k+1}
=
a_t^k + 2\eta B^\top\Big(\big(s_{t+1}^k-As_t^k-Ba_t^k\big)+\beta_t\big(g-As_t^k-Ba_t^k\big)\Big),
\qquad t=0,\dots,T-1.
\end{equation}
Stack the variables in the time-ordered vector $\mathbf{z}:=(s_1,a_0,s_2,a_1,\dots,s_T,a_{T-1})$. The updates above define an affine map $\mathbf{z}^{k+1}=\mathcal{T}(\mathbf{z}^k)=J\mathbf{z}^k+c$ whose Jacobian $J$ is block lower-triangular with respect to this ordering: indeed, $(s_{t+1}^{k+1},a_t^{k+1})$ depends only on $(s_t^k,s_{t+1}^k,a_t^k)$ (and on the fixed constants $g$ and $s_0$) and is independent of any future variables $(s_{t+2}^k,a_{t+1}^k,\dots)$. Consequently, the eigenvalues of $J$ are exactly the union of the eigenvalues of its diagonal blocks.

To characterize a diagonal block, fix $t\in\{0,\dots,T-1\}$ and consider the pair $y_t:=(s_{t+1},a_t)$. Conditioned on $s_t$ (which appears only as a constant inside $\mathrm{sg}(s_t)$ for differentiation), the update $(s_{t+1}^{k+1},a_t^{k+1})$ is precisely one gradient step on the quadratic function
\begin{equation}
\phi_t(s_{t+1},a_t; s_t)
=
\|s_{t+1}-As_t-Ba_t\|_2^2
+
\beta_t\|g-As_t-Ba_t\|_2^2,
\end{equation}
so the corresponding diagonal block equals $I-\eta H_t$ where $H_t=\nabla_{y_t}^2\phi_t$ is the constant Hessian with respect to $(s_{t+1},a_t)$:
\begin{equation}
H_t
=
2\begin{bmatrix}
I & -B\\[0.2em]
-B^\top & (1+\beta_t)B^\top B
\end{bmatrix}.
\end{equation}
Assume $\beta_t\in[\beta_{\min},\beta_{\max}]$ with $\beta_{\min}>0$ and that $B$ has full column rank, so $B^\top B\succ 0$. The Schur complement of the $I$ block is
\begin{equation}
(1+\beta_t)B^\top B - B^\top I^{-1}B = \beta_t B^\top B \succ 0,
\end{equation}
hence $H_t\succ 0$ for every $t$, with eigenvalues uniformly bounded away from $0$ and $\infty$ as $t$ varies:
\begin{equation}
0<\mu I \preceq H_t \preceq L I < \infty,
\end{equation}
for constants $\mu,L$ depending only on $\beta_{\min},\beta_{\max},\|B\|_2,$ and $\sigma_{\min}(B)$. Choosing any stepsize $\eta$ such that $0<\eta<2/L$, we obtain for every $t$ that all eigenvalues of $I-\eta H_t$ lie strictly inside the unit disk, and in particular:
\begin{equation}
\rho(I-\eta H_t)\le \max\{|1-\eta\mu|,\ |1-\eta L|\}=:q<1,
\end{equation}
where $q$ is independent of $t$ and $T$. Since $J$ is block lower-triangular and its diagonal blocks are exactly $(I-\eta H_t)$ (up to a fixed permutation corresponding to the stacking order), we conclude
\begin{equation}
\rho(J)=\max_{t}\rho(I-\eta H_t)\le q_0<1.
\end{equation}
Fix any $q$ such that $q_0<q<1$. By applying Gelfand's formula, for any square matrix $M$ and any $q>\rho(M)$, there exists an induced norm $\|\cdot\|_\dagger$ such that $\|M\|_\dagger\le q$. Applying this with $M=J$ yields a norm $\|\cdot\|_\dagger$ satisfying
\begin{equation}
\|J\|_\dagger \le q < 1.
\end{equation}
Hence, for all $k\ge 0$,
\begin{equation}
\|\mathbf z^{k+1}-\mathbf z^\star\|_\dagger
= \|J(\mathbf z^{k}-\mathbf z^\star)\|_\dagger
\le \|J\|_\dagger\,\|\mathbf z^{k}-\mathbf z^\star\|_\dagger
\le q\,\|\mathbf z^{k}-\mathbf z^\star\|_\dagger,
\end{equation}
and therefore
\begin{equation}
\|\mathbf z^{k}-\mathbf z^\star\|_\dagger \le q^k \|\mathbf z^{0}-\mathbf z^\star\|_\dagger .
\end{equation}
By norm equivalence in finite dimensions, there exists $C>0$ such that
\begin{equation}
\|\mathbf z^{k}-\mathbf z^\star\|_2 \le C q^k \|\mathbf z^{0}-\mathbf z^\star\|_2 .
\end{equation}
\end{proof}

\paragraph{Notes on the stopgrad optimization.} The optimization indeed converges to a fixed point, but one can show that these stable points in the linear convex case are merely the greedy rollouts towards the goal. Two things make the optimization in our setting nontrivial: the nonconvexity of the world model $F_\theta$, and the stochastic noise on the states $s_t$. We now present some characterization on the distribution of trajectories that our planner tends towards.

Let $F_\theta:\mathcal S\times\mathcal A\to\mathcal S$ be a differentiable world model and define the stop-gradient one-step prediction
\[
\mu_t \;\coloneqq\; F_\theta(\bar s_t, a_t),
\qquad \bar s_t = \mathrm{stopgrad}(s_t).
\]
Consider the stopgrad lifted objective (cf. Eq.~\eqref{eq:loss_full})
\begin{equation}
\label{eq:sg_full_objective_theory}
\mathcal{L}(\mathbf{s},\mathbf{a})
=
\sum_{t=0}^{T-1}\|\mu_t - s_{t+1}\|_2^2
\;+\;
\gamma\sum_{t=0}^{T-1}\|\mu_t - g\|_2^2,
\end{equation}
and the (no-sync) optimization updates
\begin{align}
\label{eq:sg_state_update_theory}
s_{t+1}^{k+1}
&=
s_{t+1}^{k} - 2\eta_s\bigl(s_{t+1}^k-\mu_t^k\bigr) + \sigma\,\xi_{t+1}^k,
\qquad \xi_{t+1}^k\sim\mathcal N(0,I),\\
\label{eq:sg_action_update_theory}
a_t^{k+1}
&=
a_t^k - \eta_a \nabla_{a_t}\mathcal{L}(\mathbf{s}^k,\mathbf{a}^k).
\end{align}
Throughout, assume $0<\eta_s<1$ (for stability of the state contraction).

\begin{theorem}[Gaussian tube around one-step predictions]
\label{thm:ou_tube}
Fix $\{\bar s_t^k\}_{t=0}^{T-1}$ and $\{a_t^k\}_{t=0}^{T-1}$ at iteration $k$, and let $\mu_t^k=F_\theta(\bar s_t^k,a_t^k)$.
Then the state update \eqref{eq:sg_state_update_theory} satisfies the conditional mean recursion
\begin{equation}
\label{eq:ou_mean_recursion}
\mathbb{E}\!\left[s_{t+1}^{k+1}\mid \mu_t^k\right]
=
(1-2\eta_s)\,\mathbb{E}\!\left[s_{t+1}^{k}\mid \mu_t^k\right]
+ 2\eta_s\,\mu_t^k.
\end{equation}
Moreover, if $\mu_t^k\equiv \mu_t$ is held fixed,
then $s_{t+1}^k$ converges in distribution to a Gaussian ``tube'' around $\mu_t$:
\begin{equation}
\label{eq:ou_stationary_tube}
s_{t+1}^\infty \ \sim\ \mathcal N\!\bigl(\mu_t,\ \Sigma_{\mathrm{tube}}\bigr),
\qquad
\Sigma_{\mathrm{tube}}
=
\frac{\sigma^2}{1-(1-2\eta_s)^2}\,I
=
\frac{\sigma^2}{4\eta_s(1-\eta_s)}\,I.
\end{equation}
Analogously, in continuous optimization-time $\tau$, the limiting SDE
\[
ds_{t+1}(\tau) = -\lambda\bigl(s_{t+1}(\tau)-\mu_t\bigr)d\tau + \sigma\,dW_{t+1}(\tau),
\quad \lambda>0,
\]
has stationary law $\mathcal N(\mu_t,\frac{\sigma^2}{2\lambda}I)$.
\end{theorem}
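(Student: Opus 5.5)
The update \eqref{eq:sg_state_update_theory} is, for fixed $\mu_t$, an affine autoregressive (AR(1)) recursion in $s_{t+1}^k$. Setting $\rho := 1-2\eta_s$, we can write
\[
s_{t+1}^{k+1}-\mu_t = \rho\,(s_{t+1}^k-\mu_t) + \sigma\,\xi_{t+1}^k .
\]
The whole proof reduces to analyzing this scalar-coefficient Gaussian AR(1) process, coordinatewise or in vector form, since the covariance is isotropic.

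\textbf{Mean recursion.} Condition on $\mu_t^k$ and take expectations of \eqref{eq:sg_state_update_theory}. The noise $\xi_{t+1}^k$ is independent of $\mu_t^k$ and of $s_{t+1}^k$, with mean zero, so the noise term drops out. Linearity of conditional expectation then gives \eqref{eq:ou_mean_recursion} immediately.

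\textbf{Stationary law.} Hold $\mu_t$ fixed and unroll the recursion:
\[
s_{t+1}^{k}-\mu_t = \rho^{k}(s_{t+1}^0-\mu_t) + \sigma\sum_{j=0}^{k-1}\rho^{j}\,\xi_{t+1}^{k-1-j}.
\]
The assumption $0<\eta_s<1$ gives $|\rho|<1$.
\begin{itemize}
\item The first term tends to $0$, almost surely for a deterministic start, or in distribution if $s^0$ has any law.
\item The second term is exactly Gaussian with mean $0$ and covariance $\sigma^2\sum_{j<k}\rho^{2j} I$, which converges to $\frac{\sigma^2}{1-\rho^2}I$.
\end{itemize}
By Slutsky and convergence of Gaussian characteristic functions, we obtain convergence in distribution to $\mathcal N(\mu_t,\Sigma_{\mathrm{tube}})$. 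The identity $1-(1-2\eta_s)^2 = 4\eta_s(1-\eta_s)$ yields the second form of $\Sigma_{\mathrm{tube}}$.

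As a sanity check, stationarity also follows from the fixed-point equation $\Sigma = \rho^2\Sigma+\sigma^2 I$.

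\textbf{Continuous-time SDE.} This is the standard Ornstein–Uhlenbeck process. Its explicit solution is
\[
s(\tau)=\mu_t+e^{-\lambda\tau}(s(0)-\mu_t)+\sigma\int_0^\tau e^{-\lambda(\tau-u)}dW(u),
\]
and the Itô isometry gives variance $\frac{\sigma^2}{2\lambda}(1-e^{-2\lambda\tau})I \to \frac{\sigma^2}{2\lambda}I$. Alternatively, one can verify that the Gaussian density solves the stationary Fokker–Planck equation.

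\textbf{Main obstacle.} None of these steps is technically hard. The only care needed is to state precisely the sense in which $\mu_t$ is "held fixed": the action updates are frozen, so the state recursion decouples from the actions. The claim is then about that frozen-$\mu$ recursion, not the full coupled dynamics. I would also note that the conditional mean recursion holds without freezing $\mu$, since only the noise independence is used.
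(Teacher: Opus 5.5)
Your proposal is correct and follows the paper's proof. Both recast the update as the centered AR(1) recursion $u^{k+1}=(1-2\eta_s)u^k+\sigma\xi^k$, read off the mean recursion from independent zero-mean noise, and treat the Ornstein--Uhlenbeck limit as standard. The paper only identifies $\Sigma_{\mathrm{tube}}$ through the Lyapunov equation $\Sigma=(1-2\eta_s)^2\Sigma+\sigma^2 I$, whereas your unrolling of the recursion (and the explicit OU solution) actually proves the convergence in distribution that the paper asserts.
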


\begin{proof}
Rewrite \eqref{eq:sg_state_update_theory} as an affine Gaussian recursion:
\[
s_{t+1}^{k+1} = (1-2\eta_s)s_{t+1}^k + 2\eta_s\mu_t^k + \sigma\xi_{t+1}^k.
\]
Taking conditional expectation yields \eqref{eq:ou_mean_recursion}. If $\mu_t^k\equiv \mu_t$ is fixed, the centered process
$u^k\coloneqq s_{t+1}^k-\mu_t$ satisfies $u^{k+1}=(1-2\eta_s)u^k+\sigma\xi^k$, i.e. an AR(1) process with contraction
factor $|1-2\eta_s|<1$. Its unique stationary covariance $\Sigma_{\mathrm{tube}}$ solves the discrete Lyapunov equation
$\Sigma_{\mathrm{tube}}=(1-2\eta_s)^2\Sigma_{\mathrm{tube}}+\sigma^2 I$, giving \eqref{eq:ou_stationary_tube}. The continuous-time statement is standard, given that $\mu_t$ is fixed.
\end{proof}

\begin{theorem}[Goal shaping induces goal-directed drift of tube center]
\label{thm:tube_center_drift}
Define $\mu_t^k = F_\theta(\bar s_t^k,a_t^k)$ and let
\[
J_t^k \;\coloneqq\; \nabla_{a_t}F_\theta(\bar s_t^k,a_t^k)\in\mathbb R^{d_s\times d_a},
\qquad
P_t^k \;\coloneqq\; J_t^k(J_t^k)^\top \succeq 0.
\]
Assume a first-order linearization in the action step:
\begin{equation}
\label{eq:mu_linearization}
\mu_t^{k+1} \approx \mu_t^k + J_t^k\,(a_t^{k+1}-a_t^k).
\end{equation}
Then the action update \eqref{eq:sg_action_update_theory} induced by \eqref{eq:sg_full_objective_theory} yields the tube-center evolution
\begin{equation}
\label{eq:tube_center_update}
\mu_t^{k+1}
\;\approx\;
\underbrace{\bigl(I-\alpha\gamma P_t^k\bigr)\mu_t^k + \alpha\gamma P_t^k g}_{\textbf{goal-directed averaging (drift)}}
\;+\;
\underbrace{\alpha\,P_t^k\,\varepsilon_{t+1}^k}_{\textbf{exploration forcing}},
\qquad
\alpha \coloneqq 2\eta_a,
\end{equation}
where $\varepsilon_{t+1}^k$ denotes the tube residual $s_{t+1}^k=\mu_t^k+\varepsilon_{t+1}^k$.
In particular, if $\mathbb E[\varepsilon_{t+1}^k\mid \mu_t^k]=0$, then
\begin{equation}
\label{eq:tube_center_expected_drift}
\mathbb E[\mu_t^{k+1}\mid \mu_t^{k}, s_t^k, a_t^k]
\;\approx\;
\bigl(I-\alpha\gamma P_t^k\bigr)\mu_t^k + \alpha\gamma P_t^k g,
\end{equation}
so in controllable directions the mean prediction $\mu_t$ moves toward $g$ as an averaging step. If $0 < \alpha\gamma\lambda_{\max}(P_t^k) < 1$, this is a contractive averaging step toward $g$ on $\mathrm{Range}(P_t^k)$.
\end{theorem}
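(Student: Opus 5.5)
The approach is to compute $\nabla_{a_t}\mathcal L$ explicitly under the stop-gradient convention, substitute it into the linearization \eqref{eq:mu_linearization}, and collect terms. Because $\bar s_t$ is held constant, the action $a_t$ enters \eqref{eq:sg_full_objective_theory} only through $\mu_t = F_\theta(\bar s_t,a_t)$, and it does so in exactly two summands: the dynamics term $\|\mu_t - s_{t+1}\|_2^2$ and the goal term $\gamma\|\mu_t-g\|_2^2$. No other summand depends on $a_t$, since $a_t$ appears nowhere else and the state inputs are detached. By the chain rule with $J_t^k = \nabla_{a_t}F_\theta(\bar s_t^k,a_t^k)$,
\begin{equation*}
\nabla_{a_t}\mathcal L(\mathbf s^k,\mathbf a^k) = 2 (J_t^k)^\top\bigl(\mu_t^k - s_{t+1}^k\bigr) + 2\gamma (J_t^k)^\top\bigl(\mu_t^k - g\bigr).
\end{equation*}

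The first key step is to use the tube decomposition $s_{t+1}^k = \mu_t^k + \varepsilon_{t+1}^k$, which gives $\mu_t^k - s_{t+1}^k = -\varepsilon_{t+1}^k$. The action step \eqref{eq:sg_action_update_theory} then becomes
\begin{equation*}
a_t^{k+1}-a_t^k = 2\eta_a (J_t^k)^\top\varepsilon_{t+1}^k - 2\eta_a\gamma (J_t^k)^\top(\mu_t^k - g).
\end{equation*}
Next, multiply by $J_t^k$, substitute into \eqref{eq:mu_linearization}, and set $\alpha = 2\eta_a$ and $P_t^k = J_t^k (J_t^k)^\top$. This gives $\mu_t^{k+1}\approx \mu_t^k - \alpha\gamma P_t^k(\mu_t^k-g) + \alpha P_t^k\varepsilon_{t+1}^k$, and regrouping yields \eqref{eq:tube_center_update}.

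For the expectation statement \eqref{eq:tube_center_expected_drift}, condition on $(\mu_t^k,s_t^k,a_t^k)$. Then $P_t^k$ and $\mu_t^k$ are measurable, since both are functions of $(s_t^k,a_t^k)$. The only random term is $\alpha P_t^k\varepsilon_{t+1}^k$, and its conditional mean vanishes by the hypothesis $\mathbb E[\varepsilon_{t+1}^k\mid\mu_t^k]=0$. The motivation for this hypothesis is Theorem~\ref{thm:ou_tube}, whose stationary tube is centered at $\mu_t$.

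This last step is the main obstacle. The hypothesis conditions only on $\mu_t^k$, whereas the target conditions on more information. I would either interpret the assumption as holding under the finer conditioning, or note that $\varepsilon_{t+1}^k$ depends on $s_{t+1}^k$ and fresh noise, which are independent of $(s_t^k,a_t^k)$ given $\mu_t^k$ in the stationary tube regime.

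For the final contraction claim, I would write $\mu_t^{k+1}-g \approx (I-\alpha\gamma P_t^k)(\mu_t^k-g)$ in expectation. Because $P_t^k$ is symmetric PSD, $I-\alpha\gamma P_t^k$ acts as the identity on $\ker P_t^k$. On $\mathrm{Range}(P_t^k)$ its eigenvalues are $1-\alpha\gamma\lambda$ for positive eigenvalues $\lambda\le\lambda_{\max}(P_t^k)$. These lie in $[0,1)$ when $0<\alpha\gamma\lambda_{\max}<1$, so the component of $\mu_t-g$ in the range contracts strictly. Each step is a convex combination, eigendirection by eigendirection, of $\mu_t^k$ and $g$, which justifies the phrase ``averaging step''.
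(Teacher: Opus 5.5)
Your proposal is correct and follows the paper's proof essentially step for step. You compute $\nabla_{a_t}\mathcal L = 2(J_t^k)^\top(\mu_t^k - s_{t+1}^k) + 2\gamma (J_t^k)^\top(\mu_t^k - g)$, substitute $s_{t+1}^k=\mu_t^k+\varepsilon_{t+1}^k$ into the linearized action step, regroup, and take conditional expectations. The paper simply invokes $\mathbb E[\varepsilon_{t+1}^k\mid\mu_t^k]=0$ without addressing the finer conditioning on $(s_t^k,a_t^k)$ (the conditioning under which $P_t^k$ becomes measurable). Your first option, reading the hypothesis as holding under that finer conditioning, is therefore the clean fix, and your eigendirection argument supplies the contraction claim that the paper states without proof.
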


\begin{proof}
From \eqref{eq:sg_full_objective_theory}, only terms at time $t$ depend on $a_t$ via $\mu_t$.
Using $\nabla_{a_t}\mu_t = J_t$, the gradient is
\begin{equation}
\label{eq:action_grad_decomp}
\nabla_{a_t}\mathcal L
=
2(J_t)^\top(\mu_t-s_{t+1}) + 2\gamma(J_t)^\top(\mu_t-g).
\end{equation}
Thus the action step is
\[
a_t^{k+1}-a_t^k
=
-\eta_a\nabla_{a_t}\mathcal L
=
-2\eta_a(J_t^k)^\top(\mu_t^k-s_{t+1}^k)
-2\eta_a\gamma(J_t^k)^\top(\mu_t^k-g).
\]
Apply the linearization \eqref{eq:mu_linearization}:
\begin{align*}
\mu_t^{k+1}
&\approx
\mu_t^k
-2\eta_a J_t^k(J_t^k)^\top(\mu_t^k-s_{t+1}^k)
-2\eta_a\gamma J_t^k(J_t^k)^\top(\mu_t^k-g)\\
&=
\mu_t^k
-2\eta_a P_t^k(\mu_t^k-(\mu_t^k+\varepsilon_{t+1}^k))
-2\eta_a\gamma P_t^k(\mu_t^k-g),
\end{align*}
which simplifies to \eqref{eq:tube_center_update} after substituting $\alpha=2\eta_a$.
Taking conditional expectation and using $\mathbb E[\varepsilon_{t+1}^k\mid \mu_t^k]=0$ yields \eqref{eq:tube_center_expected_drift}.
\end{proof}

Unlike the stopgrad lifted-state dynamics (Theorem~\ref{thm:ou_tube}), the rollout distribution is not contracted toward
the current rollout, but instead noise accumulates throughout nonlinear iterations of the world model. The stochastic rollout distribution need not concentrate in a local tube around the
current deterministic rollout trajectory; it can drift and spread away as the horizon $T$ grows.

\begin{theorem}[Mean evolution and non-tube behavior of noisy rollouts]
\label{thm:rollout_mean_non_tube}
Consider a rollout-based stochastic trajectory generated in model-time:
\begin{equation}
\label{eq:rollout_noise}
s_{t+1} = F_\theta(s_t,a_t) + \sigma_{\mathrm{env}}\zeta_t,
\qquad
\zeta_t\sim\mathcal N(0,I),
\qquad s_0 \ \text{fixed},
\end{equation}
and a dense goal objective along the rollout (e.g. $\sum_{t=0}^{T-1}\|s_{t+1}-g\|^2$).
Let $m_t\coloneqq \mathbb E[s_t]$ and $\Sigma_t\coloneqq \mathrm{Cov}(s_t)$.

The rollout mean obeys the exact identity
\begin{equation}
\label{eq:rollout_mean_exact}
m_{t+1} = \mathbb E\!\left[F_\theta(s_t,a_t)\right].
\end{equation}
In particular, if $F_\theta$ is affine in $s$ (i.e. $F_\theta(s,a)=As+Ba+c$), then
\begin{equation}
\label{eq:rollout_mean_linear}
m_{t+1} = F_\theta(m_t,a_t).
\end{equation}
For general nonlinear $F_\theta$, a second-order moment expansion yields
\begin{equation}
\label{eq:rollout_mean_second_order}
m_{t+1}
\approx
F_\theta(m_t,a_t)
+
\frac12\,\big(\mathrm{Hess}_s F_\theta(m_t,a_t)\big):\Sigma_t,
\end{equation}
so the mean generally \emph{does not} follow the deterministic rollout $F_\theta(m_t,a_t)$.

A first-order linearization gives the approximate covariance propagation
\begin{equation}
\label{eq:rollout_covariance}
\Sigma_{t+1}
\approx
G_t\,\Sigma_t\,G_t^\top + \sigma_{\mathrm{env}}^2 I,
\qquad
G_t\coloneqq \nabla_s F_\theta(\mu_t,a_t).
\end{equation}
\end{theorem}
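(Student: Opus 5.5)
The plan is to handle the three claims of Theorem~\ref{thm:rollout_mean_non_tube} in order: the exact mean identity, then the affine and second-order mean formulas, then the covariance recursion. Throughout, $a_t$ is deterministic and $\zeta_t$ is independent of $s_t$. This independence holds because $s_t$ is a measurable function of $s_0$ and $\zeta_0,\dots,\zeta_{t-1}$.

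\textbf{Exact mean identity.} Take expectations in \eqref{eq:rollout_noise}. Since $\mathbb E[\zeta_t]=0$, the noise term drops and we get $m_{t+1}=\mathbb E[F_\theta(s_t,a_t)]$, which is \eqref{eq:rollout_mean_exact}. The only technical requirement is integrability of $F_\theta(s_t,a_t)$, which we assume, for example via a linear growth bound on $F_\theta$. In the affine case $F_\theta(s,a)=As+Ba+c$, linearity of expectation gives $\mathbb E[As_t+Ba_t+c]=Am_t+Ba_t+c=F_\theta(m_t,a_t)$, which is \eqref{eq:rollout_mean_linear}.

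\textbf{Second-order mean expansion.} Assume $F_\theta$ is $C^2$ in $s$ and Taylor-expand each output coordinate around $m_t$:
\[
F_\theta(s_t,a_t)=F_\theta(m_t,a_t)+\nabla_sF_\theta(m_t,a_t)(s_t-m_t)+\tfrac12 (s_t-m_t)^\top \mathrm{Hess}_s F^{(i)}_\theta(m_t,a_t)(s_t-m_t)+R_t .
\]
The first-order term has zero mean by the definition of $m_t$. The quadratic term has expectation $\mathrm{tr}(\mathrm{Hess}_sF^{(i)}_\theta\,\Sigma_t)$, which is the contraction $\mathrm{Hess}_sF_\theta:\Sigma_t$. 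The remainder $R_t$ is $O(\mathbb E\|s_t-m_t\|^3)$ under bounded third derivatives, and this term is what is dropped in the ``$\approx$'' of \eqref{eq:rollout_mean_second_order}. To justify the phrase ``does not follow the deterministic rollout,'' note that $\Sigma_t\succeq\sigma_{\mathrm{env}}^2 I\neq 0$ for every $t\ge1$. Hence the correction term is generically nonzero whenever $F_\theta$ has nonvanishing curvature in $s$. A one-dimensional example such as $F(s,a)=s^2+a$ makes this explicit.

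\textbf{Covariance recursion.} Linearize $F_\theta(s_t,a_t)\approx F_\theta(\mu_t,a_t)+G_t(s_t-\mu_t)$, where the expansion point $\mu_t$ is taken to be the mean $m_t$. Then $s_{t+1}-\mathbb E[s_{t+1}]\approx G_t(s_t-m_t)+\sigma_{\mathrm{env}}\zeta_t$. Because $\zeta_t$ is independent of $s_t$, the cross terms vanish when we take the outer-product expectation, which yields \eqref{eq:rollout_covariance}.

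\textbf{Main obstacle.} The hardest step is making the approximations honest, since every claim except the exact identity holds only up to controlled error. The natural approach is a small-noise argument: treat $\sigma_{\mathrm{env}}$ as a small parameter, show inductively that $\Sigma_t=O(\sigma_{\mathrm{env}}^2)$ and that third central moments are $O(\sigma_{\mathrm{env}}^3)$, and bound the Taylor remainders by these quantities. This gives $O(\sigma_{\mathrm{env}}^3)$ errors in both \eqref{eq:rollout_mean_second_order} and \eqref{eq:rollout_covariance}. The constants will depend on $T$ through products of the Jacobians $G_t$, which is consistent with the point of the theorem: the rollout distribution spreads rather than concentrating in a tube around the deterministic rollout.
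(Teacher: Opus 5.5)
Your proposal is correct and follows essentially the same route as the paper, with minor differences noted here. For the mean identity, the paper conditions on $s_t$ and uses the tower property, while you use $\mathbb E[\zeta_t]=0$ directly; the affine case is linearity in both. The second-order formula comes from a Taylor expansion about the mean, and the covariance recursion from linearization plus independence of $\zeta_t$, identifying $\mu_t$ with $m_t$ as the paper implicitly does. Your additions go beyond the paper's informal argument: the exact bound $\Sigma_t\succeq\sigma_{\mathrm{env}}^2 I$, the $s^2+a$ example, and the small-noise $O(\sigma_{\mathrm{env}}^3)$ error control. One slip to fix: since your quadratic term carries a $\tfrac12$, its expectation is $\tfrac12\mathrm{tr}(\mathrm{Hess}_sF^{(i)}_\theta\Sigma_t)$, not $\mathrm{tr}(\mathrm{Hess}_sF^{(i)}_\theta\Sigma_t)$.
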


\begin{proof}
Taking conditional expectation of \eqref{eq:rollout_noise} gives $\mathbb E[s_{t+1}\mid s_t]=F_\theta(s_t,a_t)$, and then total expectation
implies \eqref{eq:rollout_mean_exact}. For affine $F_\theta$, expectation commutes with $F_\theta$, yielding \eqref{eq:rollout_mean_linear}.
For nonlinear $F_\theta$, expand $F_\theta(s_t,a_t)$ around $\mu_t$ by Taylor's theorem; the first-order term vanishes in expectation and the
second-order term produces \eqref{eq:rollout_mean_second_order}. The covariance recursion \eqref{eq:rollout_covariance} follows by linearizing
$F_\theta(s_t,a_t)\approx F_\theta(m_t,a_t)+G_t(s_t-\mu_t)$ and computing $\mathrm{Cov}(\cdot)$, adding the independent noise variance
$\sigma_{\mathrm{env}}^2 I$. The final ``non-tube'' claim follows because there is no optimization-time contraction that repeatedly pulls
$s_{t+1}$ back toward a moving center (as in \eqref{eq:ou_mean_recursion}); instead the forward propagation \eqref{eq:rollout_covariance} typically
increases spread and the mean can deviate from the deterministic path by \eqref{eq:rollout_mean_second_order}.
\end{proof}

Theorem~\ref{thm:ou_tube} shows that noisy lifted state updates form an noisy ``tube'' around the one-step predictions
$m_t=F_\theta(\bar s_t,a_t)$, keeping exploration local and dynamically consistent in optimization-time.
Theorem~\ref{thm:tube_center_drift} then shows that dense one-step goal shaping moves the tube center toward the goal by a
preconditioned averaging step, while the dynamics residual contributes approximately
zero-mean stochastic forcing that enables exploration without horizon-coupled backpropagation.
In contrast, Theorem~\ref{thm:rollout_mean_non_tube} shows that noisy rollouts evolve by forward propagation of randomness:
the mean follows $m_{t+1}=\mathbb E[F_\theta(s_t,a_t)]$ (not generally the deterministic rollout),
and the distribution can drift/spread rather than concentrate in a tube around the current plan.

\section{Ablations}\label{sec:ablations}

To rigorously evaluate the contribution of each individual component within our proposed framework, we conducted an ablation study on the Push-T environment with a horizon of 40 steps. In this experiment, we systematically removed specific modules or mechanisms from the full method while keeping all other hyperparameters constant. This analysis allows us to isolate the impact of important components such as: the noised-states optimization, the GD sync steps, and whether our state-gradient-approach is needed

The results, summarized in Table~\ref{tab:ablation_results}, highlight the critical role of each design choice in achieving robust performance. We observe that the \textbf{Full Method} achieves the highest success rate, validating the synergy between the proposed components. 

The stopgrad is an important component, without it the state optimization feels ``sticky'' and is difficult to tune. Noise also remains an important component to allow the planner to explore around local minima.

We also ablate different, simpler options of a noise and gradient-based planner, building off a normal rollout-based GD planner. We ablate with two kinds of noise:
\begin{enumerate}
\item \emph{Action noise}, where actions are sampled around Gaussians of a fixed variance $\sigma_a$ before rollout evaluation and gradient computation.
\item \emph{State noise}, where noise is added during the rollout directly to the states:
\begin{align}
    s_{t+1} &= F(s_t, a_t) + z,\\
    z &\sim \mc{N}(0, \sigma_s I).
\end{align}
\end{enumerate}
These ablations are provided in Table~\ref{tab:stochastic_gd_baselines}.

\section{Additional experiments}
\begin{table}
\centering
\caption{Stochastic GD Baselines Ablation Study, showing alternatives from our method for a stochastic gradient-based planner. For rollouts of the world model $F_\theta(s, a)$, values of $\sigma_a$ correspond to variance of noise added to actions throughout optimization, and values of $\sigma_s$ correspond to noise added to states directly through the rollout; that is, for rollouts of the world model, $s_{t+1} = F_\theta(s_t, a_i) + \xi, \xi \sim \mc N(0, I)$. None beat our method's performance, shown in Table \ref{tab:ablation_results}. Reported time is average time over all experiments, with reported intervals as 95\% CI.}
\begin{tabular}{lcc}
\toprule
\textbf{Setting} & \textbf{Accuracy (\%)} & \textbf{Time (s)} \\
\midrule
\multicolumn{3}{l}{\textit{Default Mode (objective.mode=default)}} \\
$\sigma_s=0.0$ & 53.0 $\pm$ 4.4 & 864 $\pm$ 77 \\
$\sigma_s=0.5$ & 50.6 $\pm$ 4.4 & 913.97 $\pm$ 78 \\
$\sigma_s=1.0$ & 37.2 $\pm$ 4.3 & 1149 $\pm$ 75 \\
$\sigma_a=0.0$ & 47.6 $\pm$ 4.4 & 1056 $\pm$ 73 \\
$\sigma_a=0.5$ & 53.0 $\pm$ 4.4 & 864 $\pm$ 77 \\
$\sigma_a=1.0$ & 33.4 $\pm$ 4.2 & 1208 $\pm$ 73 \\
$\sigma_s=0.5, \sigma_a=0.5$ & 53.8 $\pm$ 4.4 & 854 $\pm$ 78 \\
\midrule
\multicolumn{3}{l}{\textit{Mode All (objective.mode=all)}} \\
$\sigma_s=0.0$ & 54.0 $\pm$ 4.4 & 865 $\pm$ 77 \\
$\sigma_s=0.5$ & 55.8 $\pm$ 4.4 & 838 $\pm$ 77 \\
$\sigma_s=1.0$ & 41.0 $\pm$ 4.4 & 1111 $\pm$ 75 \\
$\sigma_a=0.0$ & 46.6 $\pm$ 4.4 & 1067 $\pm$ 73 \\
$\sigma_a=0.5$ & 52.8 $\pm$ 4.4 & 873 $\pm$ 77 \\
$\sigma_a=1.0$ & 52.4 $\pm$ 4.4 & 878 $\pm$ 77 \\
$\sigma_s=0.5, \sigma_a=0.5$ & 55.8 $\pm$ 4.4 & 838 $\pm$ 77 \\
\bottomrule
\end{tabular}
\label{tab:stochastic_gd_baselines}
\end{table}

\begin{figure}
    \centering
    \includegraphics[width=\linewidth]{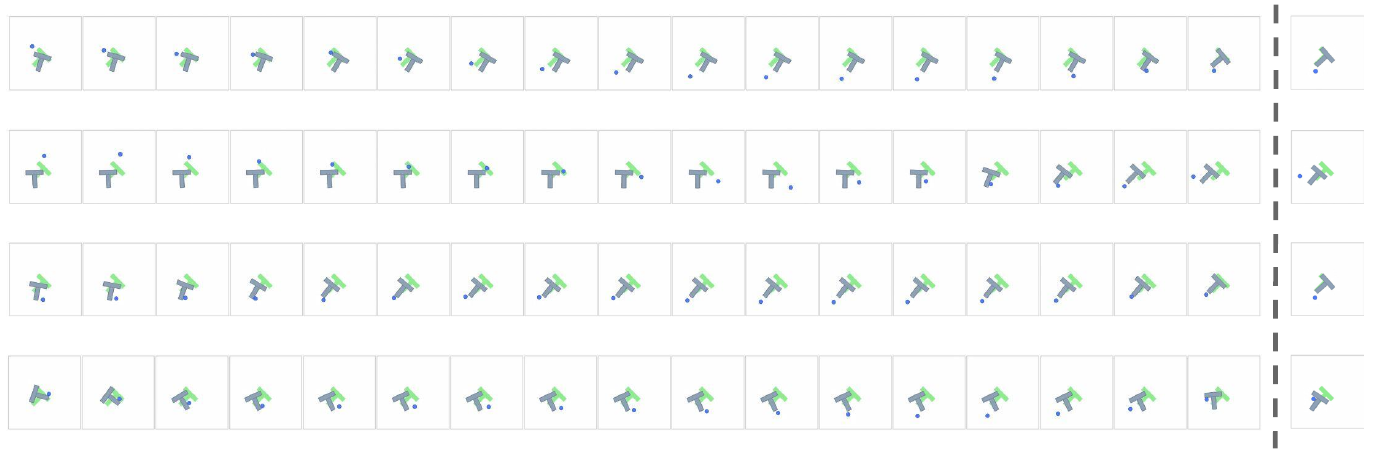}
    \caption{Example of converged plans for our planner on the Push-T environment at horizon 80, with goals depicted to the right of the dashed line. The parallel optimization is able to converge more consistently at longer horizons, and find the required non-greedy paths to the goal.}
    \label{fig:pusht_long}
\end{figure}

\begin{figure}
    \centering
    \includegraphics[width=0.7\linewidth]{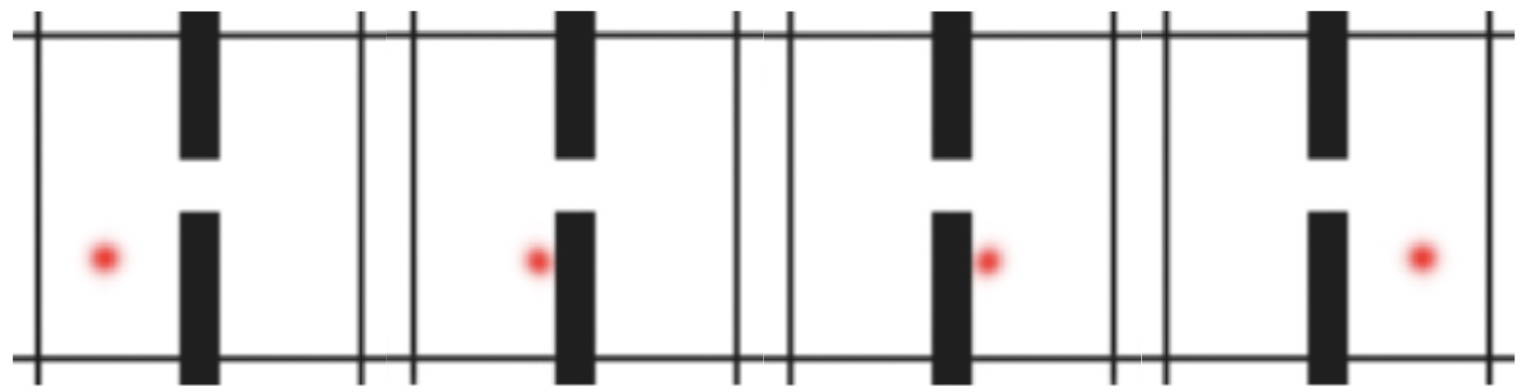}
    \caption{Example of a bad local minima when relaxing the dynamics constraint into a penalty function. The dynamics are nonzero for the middle transition, but both the states and actions are at a local minima; there's no direction they can move locally to reduce the dynamics loss further.}
    \label{fig:local_minimum_example}
\end{figure}

\end{document}